\documentclass{bmvc2k}

\usepackage{booktabs,makecell,xcolor}
\definecolor{ForestGreen}{rgb}{0.13,0.55,0.13}
\usepackage{booktabs}      
\usepackage{multirow}      
\usepackage{makecell}      
\usepackage{graphicx}      
\usepackage{adjustbox}     
\usepackage{siunitx}       
\usepackage{booktabs}   
\usepackage{makecell}   
\usepackage{xcolor}     
\usepackage{amsmath}    
\usepackage{graphicx}   

\usepackage{booktabs}      
\usepackage{multirow}      
\usepackage{makecell}      
\usepackage{graphicx}      
\usepackage{adjustbox}     
\usepackage{siunitx}       
\usepackage{xcolor}        
\usepackage{amsmath}       
\usepackage{amssymb}       
\usepackage{amsthm}        
\usepackage{hyperref}      
\usepackage{cleveref}      
\usepackage{verbatim}
\usepackage{array}
\usepackage[table,dvipsnames]{xcolor}
\usepackage[table]{xcolor}
\newcommand{\best}[1]{\cellcolor{gray!25}\textbf{#1}}
\usepackage{wrapfig}
\newtheorem{theorem}{Theorem}
\newtheorem{proposition}[theorem]{Proposition}

\theoremstyle{definition}
\newtheorem{assumption}[theorem]{Assumption}
\theoremstyle{remark}

\newcommand{\Tfinal}{T_{\mathrm{final}}}
\newcommand{\Tearly}{T_{\mathrm{early}}}
\newcommand{\hfinal}{h_{\mathrm{final}}}
\newcommand{\hearly}{h_{\mathrm{early}}}
\newcommand{\hs}{h_{S}}
\newcommand{\hSbar}{\bar{h}_{S}}
\newcommand{\zs}{z_{S}}
\newcommand{\zf}{z_{F}}
\newcommand{\ze}{z_{E}}
\newcommand{\Ltc}{\mathcal{L}_{\mathrm{TC}}}
\newcommand{\Lss}{\mathcal{L}_{\mathrm{SS}}}
\newcommand{\Lkd}{\mathcal{L}_{\mathrm{KD}}}
\newcommand{\Lce}{\mathcal{L}_{\mathrm{CE}}}
\newcommand{\Lasd}{\mathcal{L}_{\mathrm{ASD}}}
\newcommand{\Csc}{\mathbf{C}_{\mathrm{sc}}}
\newcommand{\Cschat}{\widehat{\mathbf{C}}_{\mathrm{sc}}}
\newcommand{\Uk}{\mathbf{U}_{K}}

\newcommand{\Vrob}{\mathcal{R}}
\newcommand{\Vsc}{\mathcal{S}}
\newcommand{\Dh}{\Delta h}
\newcommand{\E}{\mathbb{E}}
\newcommand{\R}{\mathbb{R}}
\newcommand{\norm}[1]{\left\Vert #1 \right\Vert}

\definecolor{ForestGreen}{rgb}{0.13,0.55,0.13}

\providecommand{\mathcolor}[2]{\textcolor{#1}{#2}}

\title{Anti-Shortcut Distillation via Temporal \\
Negative Knowledge Transfer}

\addauthor{Syed Muhammad Raza}{iamraza1998@gmail.com}{1}
\addauthor{Omer Tariq}{omer.tariq@neubility.co.kr}{1}
\addauthor{Jeongbae Son}{thswjdqo92@neubility.co.kr}{1}

\addinstitution{
 Perception AI, \\ Neubility Inc.,  \\ Seoul, South Korea
}

\runninghead{Raza, Tariq, Son}{Anti-Shortcut Distillation}

\def\etal{\emph{et al}\bmvaOneDot}

\begin{document}

\maketitle

\vspace{-5mm}
\begin{abstract}
Knowledge distillation (KD) trains a compact student by attracting it towards a converged teacher. It is silent about which directions the teacher itself learned to suppress: repulsive and bias-aware objectives exist, but none exploits the teacher's own trajectory to identify what the student should avoid. We observe that the missing signal is already encoded in the teacher's optimization trajectory: features that an early-stage teacher emphasizes but that a converged teacher attenuates are precisely the shortcut directions worth pushing the student
away from. We instantiate this observation as \textbf{A}nti-\textbf{S}hortcut \textbf{D}istillation (ASD), a push--pull KD framework that treats the converged teacher $\Tfinal$ as a positive semantic anchor and an early-checkpoint teacher $\Tearly$ as a temporal negative reference. ASD couples two losses: a temporal contrastive loss ($\Ltc$) that places the early-teacher feature as a same-sample negative against in-batch and memory-bank final-teacher features in an InfoNCE objective; and a shortcut suppression loss ($\Lss$) that penalizes student projection onto the top eigenvectors of $\E[\Dh\Dh^{\top}]$, the uncentered second-moment matrix of early-to-final feature displacements. Across 13 teacher--student pairs on CIFAR-100, ImageNet-100, and TinyImageNet, ASD attains the highest clean top-1 accuracy on more than 10 pairs and outperforms standard KD on 12. On CIFAR-100-C corruption robustness, ASD obtains the lowest mean Corruption Error ($86.1$\,mCE) on the most challenging cross-architecture pair (WRN-40-2$\to$ShuffleNet-V2). Mechanistic diagnostics confirm the intended geometry: the ASD student is systematically anti-aligned with the shortcut direction, while its projection onto the robust subspace is substantially larger ($0.45$ vs.\ $0.12$). \href{https://github.com/SMRaza1009/ASD-Anti-Shortcut-Distillation-via-Temporal-Negative-Knowledge-Transfer-} {ASD github repository}
\end{abstract}

\vspace{-4mm}
\section{Introduction}
\label{sec:intro}

Knowledge distillation (KD) \cite{hinton2015distilling, gou2021knowledge}
compresses a high-capacity teacher into a compact student by transferring
supervisory structure richer than one-hot labels. Recent methods have expanded the types of knowledge transferred during distillation: FitNets \cite{romero2014fitnets} regresses intermediate features, AT~\cite{zagoruyko2016paying} matches spatial attention,
RKD~\cite{park2019relational} and PKT~\cite{passalis2018learning} preserve
sample geometry, CRD~\cite{tian2019contrastive} adds contrastive transfer,
DKD~\cite{zhao2022decoupled} decouples logits, and ReviewKD~\cite{chen2021distilling} aggregates feature hierarchies, but all share one geometric premise: distillation is treated as a
unilateral attractive force. The student is pulled toward the
teacher, but the objective is silent about which representational
directions the student should avoid. Repulsive sample-level negatives~\cite{tian2019contrastive}, checkpoint-based positive targets~\cite{jin2022efficient}, and bias-aware distillation~\cite{lukasik2021teacher,lee2023debiased} partially address this asymmetry, but none uses the teacher itself to identify harmful representational directions.

This asymmetry matters because teacher representations are themselves
products of a temporal optimization path. Networks tend to exploit local
textures, colour statistics, and background cues early in training before
consolidating shape-biased, semantically stable
features~\cite{arpit2017closer,geirhos2019imagenet,geirhos2020shortcut}. A converged teacher records the endpoint of this process, not the
shortcut-prone directions traversed and attenuated along the way. A compact student trained under the same data distribution may therefore recover the easiest spurious correlations available, weakening robustness under corruption and shift~\cite{hendrycks2019benchmarking,sagawa2020distributionally,shah2020pitfalls}. The teacher's trajectory thus carries a second form of knowledge: not only features worth imitating, but also directions worth rejecting.

We introduce Anti-Shortcut Distillation (ASD) to convert this
latent trajectory signal into explicit push--pull supervision
(\cref{fig:pipeline}). ASD treats the converged teacher $\Tfinal$ as a
positive semantic anchor and an early-checkpoint teacher $\Tearly$ as a
temporal negative reference. For an input $x$, the displacement
$\Dh(x) = \hearly(x) - \hfinal(x)$ is a sample-dependent proxy for the
representation components emphasized by the early state and attenuated by
convergence. This sharply separates ASD from prior trajectory-aware
distillation~\cite{jin2022efficient,mirzadeh2020improved,huang2017snapshot,
furlanello2018born,zhang2019your}: intermediate checkpoints are not
additional teachers to imitate, but evidence of representational
directions from which the student should be separated.

ASD realizes this temporal signal through two complementary objectives. A
temporal contrastive loss $\Ltc$ places the early-teacher feature as a
same-sample negative against in-batch and memory-queue final-teacher
features in an InfoNCE~\cite{oord2018representation,wu2018unsupervised,poole2019variational}
objective, so the student is pulled toward $\Tfinal$ and pushed away from
$\Tearly$ for the same input. A shortcut-suppression loss $\Lss$ penalizes
the student's projection onto the leading eigenvectors of the uncentered
second-moment matrix of early-to-final feature differences, constraining the
representation at the subspace level rather than per sample. Both auxiliary
terms are linearly warmed up and combined with cross-entropy and logit
distillation; a trainable projector handles mismatched feature dimensions
and is discarded at inference.

\begin{figure*}[t]
  \centering
  \includegraphics[width=\textwidth]{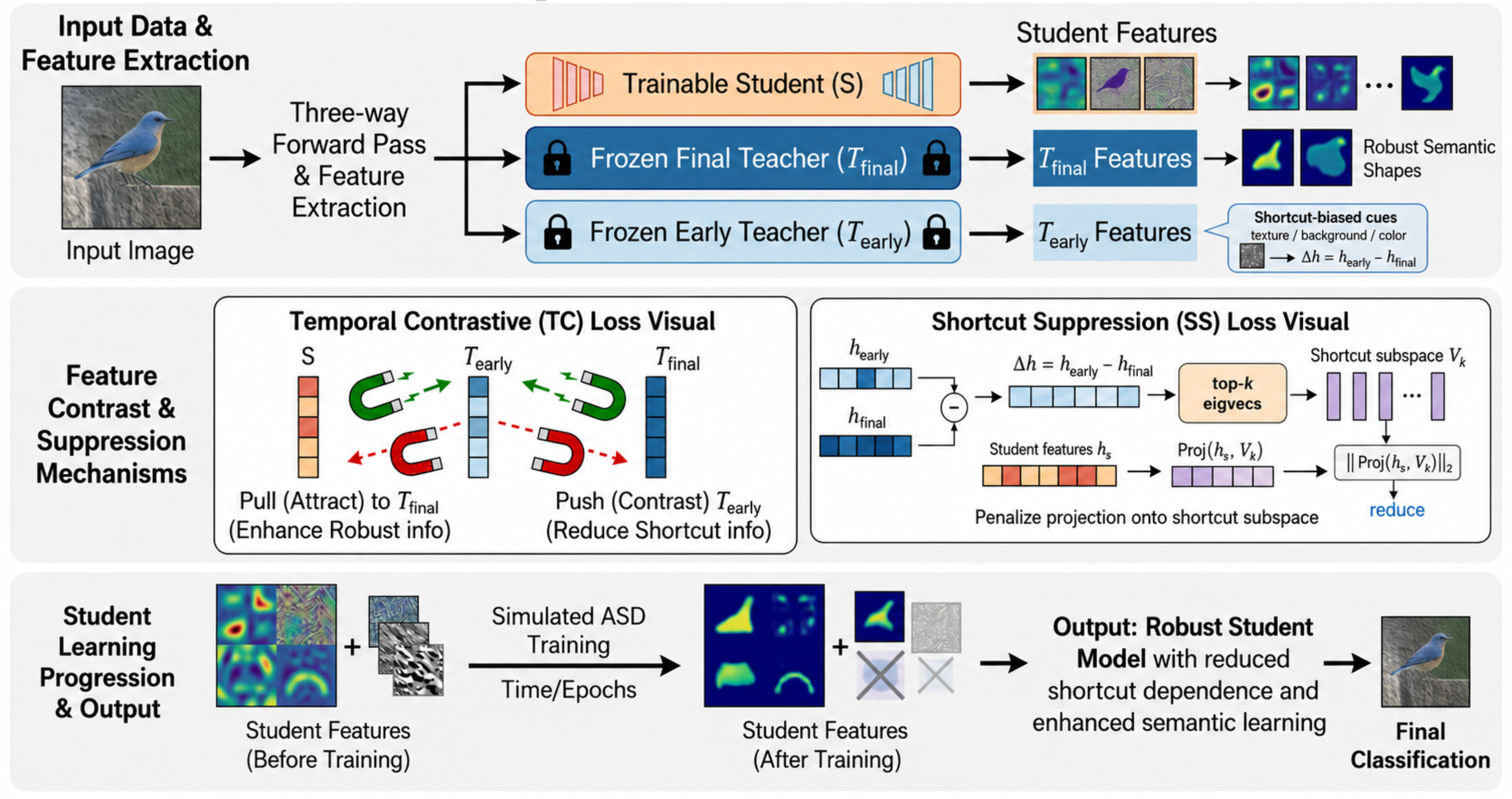}
  \caption{Anti-Shortcut Distillation (ASD) trains a student
  using two frozen teacher checkpoints: a converged teacher $\Tfinal$
  providing robust semantic targets and an early teacher $\Tearly$ exposing
  shortcut-biased cues. $\Ltc$ attracts the student toward $\Tfinal$ while
  repelling it from $\Tearly$; $\Lss$ penalizes projection onto the principal
  shortcut subspace defined by $\Dh = \hearly - \hfinal$.}
  \label{fig:pipeline}
  \vspace{-3mm}
\end{figure*}

\vspace{-4mm}
\paragraph{Contributions.}
First, we propose a signed KD framework that treats $\Tfinal$ as a positive
semantic anchor and $\Tearly$ as a temporal negative reference, combining
final-teacher imitation with explicit shortcut avoidance through $\Ltc$.
Second, we identify $\Dh = \hearly - \hfinal$ as a label-free shortcut
indicator and suppress student projection onto the leading eigenvectors of
$\E[\Dh\Dh^{\top}]$ through $\Lss$. Third, we provide a compact analysis
showing that, under a two-subspace decomposition of teacher features, the
top eigenspace of $\E[\Dh\Dh^{\top}]$ recovers the shortcut subspace, with a
Davis--Kahan~\cite{davis1970rotation,yu2015useful} perturbation bound on the
empirical (\cref{thm:shortcut_id}). Finally, across 13
teacher--student pairs on CIFAR-100, ImageNet-100, and TinyImageNet, ASD
attains the best clean accuracy on 10 pairs and surpasses KD on 12;
mechanistic diagnostics show ASD students are anti-aligned with $\Dh$
($\cos(\hs,\Dh) = -0.36$ vs.\ $0.00$ for KD) while increasing
robust-subspace projection ($0.45$ vs.\ $0.12$).

\vspace{-4mm}

\section{Related Work}
\label{sec:related}

Response-based KD~\cite{hinton2015distilling,gou2021knowledge} aligns
softened logits; feature/attention
methods~\cite{romero2014fitnets,zagoruyko2016paying} align hidden
activations; relational
methods~\cite{park2019relational,passalis2018learning} preserve sample
geometry; and recent objectives~\cite{tian2019contrastive,zhao2022decoupled,
chen2021distilling} broaden transfer through contrastive, decoupled, or
hierarchical alignment. All share an attractive premise. ASD instead uses
the teacher to define what should be suppressed, not only imitated.
TAKD~\cite{mirzadeh2020improved}, CkptKD~\cite{jin2022efficient}, snapshot
distillation~\cite{huang2017snapshot}, Born-Again
Networks~\cite{furlanello2018born}, and BYOT~\cite{zhang2019your} all
recognise that intermediate teacher states contain useful supervision and
treat them as positive targets. ASD inverts the sign: $\Tearly$ is a
temporal negative that exposes shortcut-prone directions the student should
be pushed away from.
Lukasik~\etal~\cite{lukasik2021teacher} show distillation amplifies teacher
errors on subgroups; Lee~\&~Lee~\cite{lee2023debiased} debias by
transplanting the teacher's last layer. Both diagnose when transfer is
harmful at the label/head level; ASD operationalizes the harmful
directions, extracting them label-free from the early-to-final displacement.
Networks fit simple, dominant patterns before more robust
structure~\cite{arpit2017closer,shah2020pitfalls}; in image classification
this manifests as texture
bias~\cite{geirhos2019imagenet,geirhos2020shortcut}, weakening robustness
under shift~\cite{sagawa2020distributionally,hendrycks2019benchmarking}.
ASD targets this failure mode at distillation time using the teacher's own
early-to-final feature change as a label-free shortcut indicator.
SimCLR~\cite{chen2020simple}, MoCo~\cite{he2020momentum}, and memory-bank
methods~\cite{wu2018unsupervised} learn invariances by separating positives
from negatives; supervised contrastive
learning~\cite{khosla2020supervised} adds label structure;
CRD~\cite{tian2019contrastive} brings contrast into KD with sample-level
negatives. ASD moves the contrastive axis to teacher time: positives
and negatives are distinct optimization states of the same teacher.

\vspace{-4mm}
\section{Method}
\label{sec:method}
ASD is a training-time distillation framework built around a three-way
forward pass: the trainable student, the frozen converged teacher, and a
frozen early teacher checkpoint. The final teacher provides the semantic
target to imitate, whereas the early teacher exposes shortcut-biased
directions that should not be transferred to the student. This section
describes how ASD converts this teacher trajectory into two coupled
mechanisms: temporal contrastive alignment (\cref{sec:ltc}) and
eigenvector-based shortcut suppression (\cref{sec:lss}).

\vspace{-4mm}
\subsection{Problem Setup}
\label{sec:setup}

Let $f_T : \mathcal{X} \to \R^{C}$ be a teacher classifier over $C$ classes,
and let $\phi_T(x) \in \R^{d_T}$ denote its penultimate feature for input
$x$. From the teacher training trajectory we retain two frozen snapshots:
the final teacher $\Tfinal$ and an early checkpoint $\Tearly$. The student
$f_S$ has the same output dimension $C$, but may use a different
architecture with pre-logit feature $\phi_S(x) \in \R^{d_S}$. We define
\begin{equation}
  \hfinal(x) = \phi_{\Tfinal}(x), \qquad
  \hearly(x) = \phi_{\Tearly}(x).
  \label{eq:teacher_features}
\end{equation}
When $d_S \neq d_T$, a trainable projector
$g : \R^{d_S} \to \R^{d_T}$ maps student features into the teacher feature
space; when $d_S = d_T$, $g$ is the identity. The student feature used by
ASD is therefore
\begin{equation}
  \hs(x) = g(\phi_S(x)).
  \label{eq:student_feature}
\end{equation}
Both teacher networks remain frozen throughout student training; only the
student and, when needed, the projector are optimized. The projector is
discarded at inference, so the deployed model is the student alone.

\vspace{-4mm}
\subsection{Temporal Shortcut Signal}
\label{sec:shortcut_id}

Standard KD observes only the final teacher state. ASD instead uses the
teacher's temporal change as supervision. Early teacher features tend to
retain low-level, shortcut-prone information that the converged teacher
later reduces or reorganizes into more stable semantic representations. For
each input, ASD measures this change through the early-to-final
displacement
\begin{equation}
  \Dh(x) = \hearly(x) - \hfinal(x).
  \label{eq:delta_h}
\end{equation}
This vector is not treated as noise. It is the central anti-shortcut signal:
components emphasized by $\Tearly$ but attenuated by $\Tfinal$ become
candidate directions to suppress during student learning.
For a mini-batch $\mathcal{B}$, ASD forms the uncentered second-moment matrix
\begin{equation}
  \Cschat \;=\; \frac{1}{|\mathcal{B}|} \sum_{x_i \in \mathcal{B}}
  \Dh(x_i)\,\Dh(x_i)^{\top}
  \;\in\; \R^{d_T \times d_T}.
  \label{eq:shortcut_cov}
\end{equation}
The matrix is deliberately uncentered: the mean displacement may
itself encode a dominant shortcut direction, and centering would remove
that signal. Let $\Uk = [u_1, \ldots, u_K]$ denote the top-$K$ orthonormal
eigenvectors of $\Cschat$. These eigenvectors span the
shortcut subspace used by ASD in the current training step. Although $\Cschat$ is rank-deficient at typical batch sizes
($B \ll d_T$), $\Lss$ uses only the $K{=}4 \ll B$ dimensional subspace
$\mathrm{span}(\Uk)$ and is invariant to rotations within it; accuracy and
robustness are insensitive to $B \in \{32,\dots,256\}$, while subspace
agreement improves with $B$ at the rate predicted by
\cref{sec:theory_subspace} (Supplementary Sec.~A.4.5). The same
temporal displacement now serves two roles: it defines a negative teacher
state for contrastive learning (\cref{sec:ltc}) and a geometric subspace
for shortcut suppression (\cref{sec:lss}).

\vspace{-4mm}
\subsection{Temporal Contrastive Loss}
\label{sec:ltc}

The Temporal Contrastive Loss makes the student's representation closer to
the final teacher than to the early teacher for the same input. For a
mini-batch $\{x_i\}_{i=1}^{B}$, let $\zs^{i}$, $\zf^{i}$, and $\ze^{i}$ be
the $\ell_{2}$-normalized versions of $\hs(x_i)$, $\hfinal(x_i)$, and
$\hearly(x_i)$, respectively. ASD treats $\zf^{i}$ as the positive target
for $\zs^{i}$ and $\ze^{i}$ as the temporal negative. Other final-teacher
features inside the mini-batch and a FIFO memory queue
$\mathcal{M} = \{q_m\}_{m=1}^{M}$ of normalized $\Tfinal$ features provide
additional negatives. The loss for sample $x_i$ is
\begin{equation}
  \Ltc^{(i)} \;=\; -\log
  \frac{ \exp\!\left( \zs^{i} \cdot \zf^{i} / \tau_c \right) }{ D_i },
  \label{eq:ltc}
\end{equation}
where the denominator $D_i$ assembles one positive and three types of
negatives,
\begin{align}
  D_i \;=\;&
  \underbrace{\exp\!\left( \zs^{i} \cdot \zf^{i} / \tau_c \right)}_{\text{positive}}
  +
  \underbrace{\exp\!\left( \zs^{i} \cdot \ze^{i} / \tau_c \right)}_{\text{temporal negative}}
  \nonumber \\
  & +
  \underbrace{\sum_{j \neq i} \exp\!\left( \zs^{i} \cdot \zf^{j} / \tau_c \right)}_{\text{in-batch final-teacher}}
  +
  \underbrace{\sum_{m=1}^{M} \exp\!\left( \zs^{i} \cdot q_m / \tau_c \right)}_{\text{memory-bank final-teacher}}.
  \label{eq:ltc_denominator}
\end{align}
The positive term pulls the student toward the converged teacher; the
same-sample temporal negative makes the early shortcut state explicitly
competitive; the in-batch and memory-bank terms prevent the contrastive
problem from degenerating into a binary comparison and provide a large
negative set required for an informative InfoNCE
bound~\cite{oord2018representation,poole2019variational}. Replacing the
temporal negative with random in-batch negatives recovers a CRD-style
objective; the resulting ablation
(\cref{sec:ablation_negative_source}) isolates the contribution of
$\Tearly$.

\vspace{-4mm}
\subsection{Shortcut Suppression Loss}
\label{sec:lss}

Temporal contrast reduces similarity to $\Tearly$, but it does not directly
constrain the subspace in which student features may lie. ASD therefore
adds a geometric suppression term. Given the shortcut eigenspace $\Uk$, the
normalized student feature is projected onto this subspace and penalized
when the projection magnitude exceeds a margin $\varepsilon > 0$:
\begin{equation}
  \Lss \;=\; \frac{1}{B} \sum_{i=1}^{B}
  \max\!\left( 0, \;
  \norm{ \hSbar(x_i)^{\top} \Uk }_{2} - \varepsilon \right),
  \qquad \text{where} \quad
  \hSbar(x_i) = \frac{\hs(x_i)}{\norm{\hs(x_i)}_2}.
  \label{eq:lss}
\end{equation}
Here $\norm{\hSbar^{\top}\Uk}_2 \in [0,1]$ is the (unsigned) magnitude of
the student feature's projection onto $\mathrm{span}(\Uk)$; the hinge is
inactive below $\varepsilon$ and grows linearly above it. We use
$\varepsilon = 0.1$ throughout. We stress that this projection margin is
distinct from the signed-cosine threshold $\varepsilon_{\mathrm{diag}}=-0.1$
used only as a diagnostic in \cref{sec:mechanistic_validation}; the former
constrains a subspace projection during training, the latter measures signed
alignment with a single direction at evaluation. While $\Ltc$ supplies a
relative push away from the early teacher at the sample level, $\Lss$
imposes an explicit eigenspace constraint at the batch level. Together they
prevent the student from matching the final teacher while silently
recovering the same shortcuts from data: $\Ltc$ alone cannot forbid the
student from re-discovering shortcut directions, and $\Lss$ alone carries no
per-sample temporal signal.

\vspace{-4mm}
\subsection{Projector for Heterogeneous Architectures}
\label{sec:projector}

For heterogeneous teacher--student pairs ($d_S \neq d_T$), direct feature
comparison is undefined. We use the trainable projector
\begin{equation}
  g \;=\; \mathrm{Linear}(d_S, d_T) \to \mathrm{BN}(d_T) \to \mathrm{ReLU}
  \to \mathrm{Linear}(d_T, d_T).
  \label{eq:projector}
\end{equation}
The projector is optimized jointly with the student and used only to
compute feature-level ASD losses; it is removed at inference, keeping the
deployed model architecture-unchanged.

\vspace{-4mm}
\subsection{Training Objective}
\label{sec:total_loss}

ASD combines task supervision, final-teacher logit distillation, temporal
contrast, and shortcut-subspace suppression:
\begin{equation}
  \Lasd \;=\; \Lce \;+\; \alpha_{\mathrm{kd}}\Lkd
  \;+\; \beta(t)\,\alpha_{\mathrm{tc}}\Ltc
  \;+\; \gamma(t)\,\alpha_{\mathrm{ss}}\Lss.
  \label{eq:total}
\end{equation}
Here $\Lce$ is cross-entropy and $\Lkd$ is temperature-scaled logit
distillation~\cite{hinton2015distilling} from $\Tfinal$. Because the student
representation is unstable early in training, ASD warms up the two
auxiliary terms linearly:
\begin{equation}
  \beta(t) \;=\; \gamma(t) \;=\; \min\!\left( \frac{t}{T_{\mathrm{warmup}}}, \; 1 \right).
  \label{eq:warmup}
\end{equation}
The resulting objective is a push--pull distillation process: the student is
pulled toward the final teacher's semantic representation while being
pushed away from shortcut directions exposed by the early teacher.

\vspace{-4mm}
\section{Theoretical Analysis}
\label{sec:theory}

We give a compact analysis of ASD at the population level and defer
extended proofs to the supplementary material. The aim is not to make hard
generalization guarantees, but to give a precise meaning to the two losses
and to the role of $\Dh$ as a shortcut indicator. We state one
information-theoretic interpretation of $\Ltc$
(\cref{prop:infonce_alignment}), one geometric reading of $\Lss$
(\cref{prop:projection_margin}), and one structural theorem
(\cref{thm:shortcut_id}) that justifies $\Dh$ as a shortcut indicator.

\vspace{-5mm}
\subsection{Information-Theoretic Interpretation of \texorpdfstring{$\Ltc$}{L\_TC}}
\label{sec:theory_tc}

The InfoNCE objective lower-bounds the mutual information between the two
endpoints of the positive pair, with a $\log N$ saturation term
depending on the number of marginal negatives~\cite{oord2018representation,
poole2019variational}. We give the version of this bound that applies to
$\Ltc$, with the temporal negative correctly handled.

\begin{proposition}[InfoNCE alignment with $\Tfinal$]
\label{prop:infonce_alignment}
Let $\widetilde{\Ltc}$ denote the variant of $\Ltc$ in which the same-sample
temporal-negative term $\exp(\zs^{i}\cdot\ze^{i}/\tau_c)$ is removed from
the denominator of \cref{eq:ltc_denominator}, leaving only the positive
and the $|\mathcal{N}_F| = (B-1) + M$ in-batch and memory-bank
\emph{final-teacher} negatives, drawn i.i.d.\ from the marginal
$p_{\hfinal}$. Then
\begin{equation}
  I(\hs;\hfinal) \;\geq\; \log\bigl(|\mathcal{N}_F| + 1\bigr) - \E[\widetilde{\Ltc}].
  \label{eq:infonce_bound}
\end{equation}
Furthermore, $\Ltc \geq \widetilde{\Ltc}$ pointwise, so
$\E[\widetilde{\Ltc}] \leq \E[\Ltc]$ and the same bound is preserved when
$\Ltc$ replaces $\widetilde{\Ltc}$.
\end{proposition}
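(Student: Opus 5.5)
The plan is to handle the two claims separately. The first is the standard InfoNCE bound of van den Oord et al.\ and Poole et al., applied with critic $f(\hs,\hfinal)=\exp(g(\hs)\cdot \hfinal/\tau_c)$ after normalization. The second is an elementary monotonicity remark.

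For the first claim, set $N=|\mathcal{N}_F|+1$. Condition on $\hs^i$ and form the candidate set $\{\zf^i\}\cup\mathcal{N}_F$. By assumption, the $N-1$ negatives are drawn i.i.d.\ from $p_{\hfinal}$, independently of the pair $(\hs^i,\hfinal^i)$, and the positive is drawn from $p(\hfinal\mid \hs^i)$. Then $\widetilde{\Ltc}^{(i)}$ is exactly the categorical cross-entropy of identifying the positive index among $N$ candidates using scores $s(z,z')=z\cdot z'/\tau_c$.

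I will follow the Poole et al.\ derivation. Write the multi-sample variational bound $I(\hs;\hfinal)\ge \E\bigl[\log \frac{e^{s(\hs,\hfinal^1)}}{\frac1N\sum_{k} e^{s(\hs,\hfinal^k)}}\bigr]$. This comes from the NWJ/Donsker--Varadhan bound with a batch-dependent baseline, or equivalently from treating the positive index as a uniform latent variable: the mutual information between that index and the whole set is at most $I(\hs;\hfinal)$ by data processing, and the cross-entropy upper-bounds the conditional entropy of the index. Expanding the $\frac1N$ factor gives $\log N-\E[\widetilde{\Ltc}]$, which is exactly \cref{eq:infonce_bound}.

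The main obstacle is the memory-bank negatives. Queue entries $q_m$ come from earlier minibatches, so they are only approximately i.i.d.\ from the marginal and independent of the current sample. I will not try to prove this; the proposition's hypothesis assumes it, and I will state explicitly that the bound is conditional on it. The temporal negative $\ze^i$ is correlated with $\hs^i$, so it cannot be one of the marginal samples. This is precisely why the statement removes it, and the second claim then recovers the bound for the full loss.

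For the second claim, note that the denominator $D_i$ of $\Ltc^{(i)}$ equals the denominator $\widetilde D_i$ of $\widetilde{\Ltc}^{(i)}$ plus the strictly positive term $\exp(\zs^i\cdot\ze^i/\tau_c)$, while the numerators coincide. Since $-\log$ is decreasing, $\Ltc^{(i)}=\widetilde{\Ltc}^{(i)}+\log(1+e^{\zs^i\cdot\ze^i/\tau_c}/\widetilde D_i)\ge \widetilde{\Ltc}^{(i)}$ pointwise. Averaging over $i$ and taking expectations gives $\E[\widetilde{\Ltc}]\le\E[\Ltc]$. Substituting into \cref{eq:infonce_bound} gives $I(\hs;\hfinal)\ge\log(|\mathcal{N}_F|+1)-\E[\Ltc]$, so minimizing $\Ltc$ still raises a valid lower bound on the mutual information.
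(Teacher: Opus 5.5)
Your proposal is correct and has the same two-part structure as the paper's proof. First comes the standard InfoNCE bound with positive pair $(\hs,\hfinal)$ and $|\mathcal{N}_F|$ marginal final-teacher negatives. Second comes the pointwise observation that adding $\exp(\zs^{i}\cdot\ze^{i}/\tau_c)>0$ to the denominator can only increase the loss, which your identity $\Ltc^{(i)}=\widetilde{\Ltc}^{(i)}+\log\bigl(1+e^{\zs^{i}\cdot\ze^{i}/\tau_c}/\widetilde{D}_i\bigr)$ makes explicit. The only difference is in the first part, and it works in your favour. The paper follows van den Oord's Bayes-optimal-classifier argument, which assumes the exponentiated critic approximates $p(f\mid\hs)/p(f)$. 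Your Poole et al.\ / latent-index derivation (data processing plus Gibbs' inequality) holds for the actual learned dot-product critic without that approximation.
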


\begin{proof}[Proof sketch]
\Cref{eq:infonce_bound} is the standard InfoNCE
bound~\cite{oord2018representation,poole2019variational} applied with
positive pair $(\hs, \hfinal)$ and negatives drawn i.i.d.\ from
$p_{\hfinal}$. Adding an extra positive term to the denominator of a
softmax can only decrease the log-probability of the positive class, so
$\Ltc \geq \widetilde{\Ltc}$. The full proof and a discussion of bound
tightness are in the supplementary.
\end{proof}

\noindent The proposition gives the temporal negative an operational
reading: it does not change the MI bound on $\Tfinal$ (the bound is
controlled by the marginal $\Tfinal$ negatives), but it adds an explicit
penalty on similarity to $\Tearly$ \emph{at no cost} to the bound. This is
precisely how ASD differs from CRD~\cite{tian2019contrastive}: the
same-sample temporal negative changes the geometry of the contrastive
decision, not merely the count of negatives.

\vspace{-4mm}
\subsection{Geometric Interpretation of \texorpdfstring{$\Lss$}{L\_SS}}
\label{sec:theory_ss}

\begin{proposition}[Angular margin from shortcut suppression]
\label{prop:projection_margin}
Let $\theta(x) \in [0, \tfrac{\pi}{2}]$ be the angle between $\hSbar(x)$ and
its orthogonal projection onto $\mathrm{span}(\Uk)$, i.e.\ the angle to the
nearest point of the subspace. Since $\hSbar(x)$ is unit-norm and $\Uk$ has
orthonormal columns, $\norm{\hSbar(x)^{\top}\Uk}_2 = \cos\theta(x)$
exactly. Consequently, if $\Lss = 0$ on a sample $x$ then
$\cos\theta(x) \le \varepsilon$, equivalently
$\theta(x) \ge \arccos(\varepsilon)$. With $\varepsilon = 0.1$ this enforces
$\theta(x) \ge 84.3^{\circ}$.
\end{proposition}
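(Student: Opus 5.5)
The argument is elementary linear algebra: an orthogonal decomposition of $\hSbar(x)$, followed by reading off the hinge condition.

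\textbf{Orthogonal decomposition.} Fix a sample $x$ and write $v = \hSbar(x)$, which has unit norm. Since $\Uk$ has orthonormal columns, $P = \Uk\Uk^{\top}$ is the orthogonal projector onto $\mathrm{span}(\Uk)$. Decompose
\[
v = Pv + (I-P)v ,
\]
where the two summands are orthogonal. Pythagoras gives $\norm{Pv}_2^2 + \norm{(I-P)v}_2^2 = 1$. Using $\Uk^{\top}\Uk = I_K$, we get
\[
\norm{Pv}_2^2 = v^{\top}\Uk\Uk^{\top}\Uk\Uk^{\top}v = \norm{\Uk^{\top}v}_2^2 ,
\]
and $\norm{\Uk^{\top}v}_2 = \norm{v^{\top}\Uk}_2$. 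Hence the quantity appearing in $\Lss$ equals $\norm{Pv}_2$.

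\textbf{Identifying the angle.} When $Pv \neq 0$, the angle $\theta$ between $v$ and $Pv$ satisfies
\[
\cos\theta = \frac{v^{\top}Pv}{\norm{v}_2\,\norm{Pv}_2} = \frac{\norm{Pv}_2^2}{\norm{Pv}_2} = \norm{Pv}_2 ,
\]
using $P^2 = P = P^{\top}$ and $\norm{v}_2 = 1$. This also shows $\cos\theta \ge 0$, so $\theta \in [0,\pi/2]$ as the statement claims.

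The only delicate point is the degenerate case $Pv = 0$. Here the angle is not defined by the formula above, so we adopt the convention $\theta = \pi/2$; the identity $\cos\theta = 0 = \norm{Pv}_2$ then still holds. To justify the phrase ``angle to the nearest point of the subspace'', we check that $\theta$ is the minimal angle between $v$ and any nonzero $w \in \mathrm{span}(\Uk)$. By Cauchy--Schwarz,
\[
\frac{v^{\top}w}{\norm{w}_2} = \frac{(Pv)^{\top}w}{\norm{w}_2} \le \norm{Pv}_2 ,
\]
with equality exactly when $w$ is a positive multiple of $Pv$. So $Pv$ direction attains the smallest angle.

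\textbf{Reading off the margin.} The hinge $\max(0, a - \varepsilon)$ vanishes if and only if $a \le \varepsilon$. The batch loss $\Lss$ is an average of nonnegative terms, so its per-sample term is zero exactly when $\cos\theta(x) \le \varepsilon$. On $[0,\pi/2]$ the function $\arccos$ is decreasing, so this is equivalent to $\theta(x) \ge \arccos(\varepsilon)$. Numerically, $\arccos(0.1) \approx 84.26^{\circ}$, which rounds to the stated $84.3^{\circ}$.
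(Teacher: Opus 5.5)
Your proof is correct and follows the paper's argument essentially step for step: the projector $P=\Uk\Uk^{\top}$, the identity $\norm{P\hSbar}_2=\norm{\Uk^{\top}\hSbar}_2$ via $\Uk^{\top}\Uk=I_K$, $\cos\theta=\norm{P\hSbar}_2$ for unit-norm $\hSbar$, and monotonicity to read the angular margin off the hinge condition. Your explicit computation of $\cos\theta$, the convention for $P\hSbar=0$, and the Cauchy--Schwarz check of the nearest-point claim fill in steps the paper only asserts; one small slip is that it is $\cos$ that is decreasing on $[0,\pi/2]$ (equivalently $\arccos$ on $[0,1]$), not $\arccos$ on $[0,\pi/2]$.
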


The proposition gives $\Lss$ a direct geometric reading: when it vanishes,
the student feature is separated from \emph{every} direction in the
shortcut subspace by at least $\arccos(\varepsilon)$, a strictly
stronger constraint than the relative push provided by $\Ltc$
(\cref{sec:ablation_components}).

\vspace{-4mm}
\subsection{When Does \texorpdfstring{$\Dh$}{Delta-h} Identify the Shortcut Subspace?}
\label{sec:theory_subspace}

We formalize when the top eigenspace of $\E[\Dh\Dh^{\top}]$ identifies a
meaningful shortcut subspace through a two-subspace decomposition of
teacher features.

\begin{assumption}[Two-subspace decomposition]
\label{assum:two_subspace}
Teacher features at any state $t \in \{\mathrm{early},\mathrm{final}\}$
admit the decomposition
\begin{equation}
  h_t(x) \;=\; r_t(x) + s_t(x),
  \qquad r_t(x) \in \Vrob, \;\; s_t(x) \in \Vsc,
  \label{eq:two_subspace}
\end{equation}
where $\Vrob$ and $\Vsc$ are orthogonal complementary subspaces of $\R^{d_T}$.
\end{assumption}

\begin{theorem}[Shortcut subspace identification]
\label{thm:shortcut_id}
Under assumption \ref{assum:two_subspace}, let
$G_r = \E[(r_{\mathrm{early}} - r_{\mathrm{final}}) \\
(r_{\mathrm{early}} - r_{\mathrm{final}})^{\top}]$ and
$G_s = \E[(s_{\mathrm{early}} - s_{\mathrm{final}})
(s_{\mathrm{early}} - s_{\mathrm{final}})^{\top}]$. If the shortcut
component dominates,
\begin{equation}
  \lambda_{K}(G_s) \;>\; \norm{G_r}_2,
  \label{eq:shortcut_energy}
\end{equation}
then the top-$K$ eigenspace of $\E[\Dh\Dh^{\top}]$ coincides with the
top-$K$ eigenspace of $G_s$, which lies entirely in $\Vsc$. Consequently,
minimizing $\Lss$ reduces student projection onto $\Vsc$.
\end{theorem}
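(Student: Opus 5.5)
The plan is to decompose $\Dh$ along the orthogonal splitting of Assumption~\ref{assum:two_subspace} and show that $\E[\Dh\Dh^{\top}]$ is block diagonal. Write $\Delta r = r_{\mathrm{early}} - r_{\mathrm{final}} \in \Vrob$ and $\Delta s = s_{\mathrm{early}} - s_{\mathrm{final}} \in \Vsc$, so that $\Dh = \Delta r + \Delta s$ and
\begin{equation*}
  \E[\Dh\Dh^{\top}] \;=\; G_r + G_s + \E[\Delta r\,\Delta s^{\top}] + \E[\Delta s\,\Delta r^{\top}].
\end{equation*}

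The first point to settle is the cross terms. They do \emph{not} vanish in general: $\Delta r$ and $\Delta s$ live in orthogonal subspaces, but orthogonality of the subspaces does not make $\E[\Delta r\,\Delta s^{\top}]$ zero. That would require the coordinates to be uncorrelated.

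I would handle this in one of two ways.
\begin{itemize}
  \item[(i)] State explicitly an uncorrelatedness condition, $\E[\Delta r\,\Delta s^{\top}] = 0$, implicit in ``two-subspace decomposition''. Then the matrix equals $G_r \oplus G_s$ on $\Vrob \oplus \Vsc$.
  \item[(ii)] Keep the cross term $C = \E[\Delta r\,\Delta s^{\top}]$ and treat the off-diagonal blocks as a perturbation, via a Davis--Kahan $\sin\Theta$ bound (the same tool the paper already invokes for the empirical estimator). Identification then becomes approximate, with error of order $\norm{C}_2 / (\lambda_K(G_s) - \norm{G_r}_2)$.
\end{itemize}
I would present the exact statement under (i) as the theorem's content and flag (ii) as a remark. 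I expect this to be the main obstacle, in the sense of making the statement literally true.

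Under block diagonality, the eigen-analysis is direct. $G_s$ has range in $\Vsc$ and annihilates $\Vrob$; $G_r$ has range in $\Vrob$ and annihilates $\Vsc$. Hence the spectrum of $\E[\Dh\Dh^{\top}]$ is the union of the spectra of $G_s|_{\Vsc}$ and $G_r|_{\Vrob}$, with eigenvectors inherited from each block. Every eigenvalue coming from the $\Vrob$ block is at most $\norm{G_r}_2$. By \eqref{eq:shortcut_energy}, the $K$ largest eigenvalues of $G_s$ all strictly exceed $\norm{G_r}_2$. So the top $K$ eigenvalues of the full matrix are exactly $\lambda_1(G_s),\dots,\lambda_K(G_s)$, and the strict inequality gives a positive eigengap. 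The top-$K$ eigenspace is therefore uniquely defined and equals the top-$K$ eigenspace of $G_s$, which lies in $\Vsc$.

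Ties among $\lambda_1,\dots,\lambda_K$ are harmless because $\Lss$ is rotation invariant within $\mathrm{span}(\Uk)$. One caveat: if $\lambda_K(G_s) = \lambda_{K+1}(G_s)$, the eigenspace itself is ambiguous. I would assume $\lambda_K(G_s) > \lambda_{K+1}(G_s)$, or interpret the statement as holding for any choice of top-$K$ eigenspace, which still lies in $\Vsc$.

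For the final claim, set $P = \Uk\Uk^{\top}$, which projects onto a subspace of $\Vsc$. Then $\norm{\hSbar^{\top}\Uk}_2 = \norm{P\hSbar}_2$, so by Proposition~\ref{prop:projection_margin} driving $\Lss$ to zero forces $\norm{P\hSbar}_2 \le \varepsilon$. This controls the student's projection onto the top-$K$ part of $\Vsc$. If $K = \dim\Vsc$, then $P$ is the full projector onto $\Vsc$. Otherwise the claim should be read as reducing projection onto the dominant shortcut directions within $\Vsc$, and I would phrase it that way.

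Finally, I would add the empirical Davis--Kahan corollary. It bounds $\norm{\sin\Theta(\widehat{\mathbf U}_K,\Uk)}$ by $2\norm{\Cschat - \E[\Dh\Dh^{\top}]}_2$ divided by the gap $\lambda_K(G_s) - \norm{G_r}_2$, with a matrix concentration bound giving an $O(B^{-1/2})$ rate.
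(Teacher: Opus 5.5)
Your route is the paper's: split $\Dh=\Dh_r+\Dh_s$, show $\E[\Dh\Dh^{\top}]=G_r\oplus G_s$, let \eqref{eq:shortcut_energy} push the top-$K$ eigenvalues into the $\Vsc$ block, and handle $\Cschat$ by concentration plus Davis--Kahan. You depart from the paper exactly where its proof breaks. The paper's Step~1 argues that $\E[\Dh_r\Dh_s^{\top}]=0$ because $(\Dh_r)_i=0$ for $i>p$ and $(\Dh_s)_j=0$ for $j\le p$. That only shows the cross term sits in the off-diagonal $p\times(d_T-p)$ block, whose entries $\E[(\Dh_r)_i(\Dh_s)_j]$ Assumption~\ref{assum:two_subspace} leaves unconstrained.

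Your hypothesis (i) is therefore necessary, since the statement is false without it. Take $d_T=2$, $\Vrob=\mathrm{span}(e_1)$, $\Vsc=\mathrm{span}(e_2)$, $K=1$, and $\Dh=Z\,(1/2,\,1)^{\top}$ for standard normal $Z$. Then $G_r=\mathrm{diag}(1/4,0)$ and $G_s=\mathrm{diag}(0,1)$, so \eqref{eq:shortcut_energy} holds. Yet $\E[\Dh\Dh^{\top}]$ is rank one with top eigenvector $(1,2)^{\top}/\sqrt{5}\notin\Vsc$. Under (i), your eigen-analysis, including the treatment of ties, is correct. Your option (ii) is what the paper's Step~4 actually computes: a bound $\norm{\E[\Dh_r\Dh_s^{\top}]}_2\le\delta_{\perp}$ fed into Davis--Kahan. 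The paper, however, attributes that residual to approximate orthogonality of $\Vrob$ and $\Vsc$, whereas, as you say, it is the generic case even under exact orthogonality.

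Your reading of the final claim is also the right one. Driving $\Lss$ to zero bounds $\norm{\Uk\Uk^{\top}\hSbar}_2$, which is projection onto all of $\Vsc$ only when $K=\dim\Vsc$.

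The one step to repair is your empirical corollary, which repeats a slip in the paper's Step~3. Block structure gives $\lambda_{K+1}(\E[\Dh\Dh^{\top}])=\max\{\lambda_{K+1}(G_s),\norm{G_r}_2\}$, not $\norm{G_r}_2$. So when $\dim\Vsc>K$, the gap controlling $\sin\Theta(\widehat{\mathbf U}_K,\Uk)$ (with $\Uk$ the population top-$K$ space, as in your notation) can be far smaller than $\lambda_K(G_s)-\norm{G_r}_2$. For example, take $\E[\Dh\Dh^{\top}]=\mathrm{diag}(0.1,1,0.99)$ with $\Vrob=\mathrm{span}(e_1)$ and $K=1$. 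The perturbation $E=\mathrm{diag}(0,0,0.02)$ rotates the top eigenvector by $90^{\circ}$, while your bound gives about $0.044$.

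What the theorem needs is the distance from $\widehat{\mathbf U}_K$ to $\Vsc$, not to one particular $K$-dimensional subspace of it. Apply Davis--Kahan to two spaces:
\begin{itemize}
\item the invariant subspace $\Vrob$ of $\E[\Dh\Dh^{\top}]$, whose eigenvalues are at most $\norm{G_r}_2$;
\item the empirical top-$K$ space, whose eigenvalues are at least $\lambda_K(G_s)-\norm{E}_2$ by Weyl, with $E=\Cschat-\E[\Dh\Dh^{\top}]$.
\end{itemize}
This gives $\norm{P_{\Vrob}\widehat{\mathbf U}_K}_2\le\norm{E}_2/(\lambda_K(G_s)-\norm{G_r}_2-\norm{E}_2)$, with $P_{\Vrob}$ the orthogonal projector onto $\Vrob$. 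Once $\norm{E}_2$ is at most half that gap, this recovers your $2\norm{E}_2/(\lambda_K(G_s)-\norm{G_r}_2)$.
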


\begin{proof}[Proof sketch]
Under $\Vrob \perp \Vsc$, the cross terms in $\E[\Dh\Dh^{\top}]$ vanish, so
$\E[\Dh\Dh^{\top}] = G_r \oplus G_s$ is block-diagonal in the basis
$\Vrob \cup \Vsc$. Eigenvectors of a block-diagonal symmetric matrix are
contained in the individual blocks; \cref{eq:shortcut_energy} forces the
top-$K$ eigenvalues to come from $G_s$, so the top-$K$ eigenspace lies in
$\Vsc$. For the empirical $\Cschat$, Davis--Kahan
sin-$\Theta$~\cite{davis1970rotation,yu2015useful} bounds the deviation
from the population subspace by
$\norm{\Csc - \Cschat}_2 / (\lambda_K(G_s) - \norm{G_r}_2)$. The full proof
is in the supplementary.
\end{proof}
\vspace{-3mm}
\noindent\textit{Interpretation.} \Cref{eq:shortcut_energy} states that
between early and final teacher states the dominant change in feature
variance occurs in the shortcut subspace, not the robust subspace. This is
the empirical regime that motivates ASD, and we verify it directly: the
principal angle between the shortcut subspace and the
$\Tfinal$-PCA subspace is $65.3^{\circ}$ against $38.6^{\circ}$ to the
$\Tearly$-PCA subspace (\cref{sec:mechanistic_validation}).
Across the teacher trajectory (Supplementary Fig.~A.6), shortcut energy
in $\mathrm{span}(\Uk)$ peaks at $\Tearly$ ($0.25$) and drops by ${\sim}49\%$
at convergence ($0.13$), while robust $\Tfinal$-PCA energy remains stable
(${\sim}0.11$--$0.13$), supporting the premise of $\Dh$.

\subsection{Robustness under Shortcut-Aligned Shifts}
\label{sec:theory_robustness}

If $\Lss$ is small, the student is confined to a region whose projection
onto $\mathrm{span}(\Uk)$ is at most $\varepsilon$. For a shift that
perturbs features within $\mathrm{span}(\Uk)$ and a locally-linear
classifier head, the first-order change in classifier output is bounded by
this projection. ASD therefore exhibits reduced first-order sensitivity
to shortcut-aligned shifts compared with a KD student of larger shortcut
projection. The linearity and shift-support conditions are restrictive; we
treat this as a design hypothesis and evaluate it empirically in
\cref{sec:robustness,sec:mechanistic_validation}.

\vspace{-4mm}
\section{Experiments}
\label{sec:experiments}

\vspace{-2mm}
\subsection{Experimental Setup}
\label{sec:setup_exp}


\vspace{-2mm}
\paragraph{Datasets and shortcut benchmarks.}
CIFAR-100~\cite{krizhevsky2009learning} provides the main
controlled distillation setting (50K/10K train/test, 100 classes).
CIFAR-100-C~\cite{hendrycks2019benchmarking} contains 15 corruption types
across 5 severity levels and is used only for robustness evaluation.
ImageNet-100 (a 100-class subset of ImageNet~\cite{deng2009imagenet})
evaluates ASD at larger image resolution and data scale. TinyImageNet
stresses the method with 200 classes, 100K/10K images, and $64\times64$
inputs. Corruptions alone are not shortcut benchmarks, so we
additionally evaluate the trained ImageNet-100 students
(ResNet-50$\to$MobileNet-V2), without retraining, on
\emph{Stylized-ImageNet-100} (AdaIN~\cite{geirhos2019imagenet}, full
100-class val set), \emph{ImageNet-R}~\cite{hendrycks2021many} (19
overlapping classes), and \emph{ImageNet-A}~\cite{hendrycks2021nae}
(15-class overlap, 680 images); protocol and per-seed results in
Supplementary Sec.~A.3.9.
\vspace{-4mm}
\paragraph{Baselines.}
We compare with CE-only, KD~\cite{hinton2015distilling},
FitNets~\cite{romero2014fitnets}, AT~\cite{zagoruyko2016paying},
DKD~\cite{zhao2022decoupled}, CRD~\cite{tian2019contrastive}, and
CkptKD~\cite{jin2022efficient}, covering logit transfer, feature
regression, attention alignment, decoupled logits, contrastive
representation distillation, and checkpoint reuse. All methods share the
optimizer, schedule, augmentation, and training budget.
\vspace{-4mm}
\paragraph{Training details.}
CIFAR-100: 240 epochs, SGD (momentum 0.9, weight decay $5\times10^{-4}$),
initial LR 0.05, cosine schedule, batch 64. ImageNet-100: 100 epochs, LR
0.1, batch 256. TinyImageNet: 200 epochs. $\Tearly$ is the 15\%-of-training
checkpoint on every dataset. ASD hyperparameters:
$\alpha_{\mathrm{kd}}=1.0$, $\alpha_{\mathrm{tc}}=0.8$,
$\alpha_{\mathrm{ss}}=1.0$, $\tau_{\mathrm{kd}}=4.0$, $\tau_c=0.07$,
$\varepsilon=0.1$, $K=4$, $M=4096$, $T_{\mathrm{warmup}}=20$ epochs.
Robustness is mean Corruption Error (mCE) on
CIFAR-100-C~\cite{hendrycks2019benchmarking} across all 15 corruption
types at severity 5. Results are mean$\pm$std over 3 seeds unless noted.

\vspace{-4mm}
\subsection{Same-Family CIFAR-100 Results}
\label{sec:same_family}
\begin{table}[ht]
\footnotesize
\setlength{\tabcolsep}{3.5pt}
\begin{center}
\resizebox{\linewidth}{!}{
\begin{tabular}{l|c|c|c|c|c}
\toprule
\thead{Teacher \\ Student} & \thead{ResNet-34 \\ ResNet-18} & \thead{WRN-40-2 \\ WRN-16-2} & \thead{ResNet-32$\times$4 \\ ResNet-8$\times$4} & \thead{WRN-40-2 \\ WRN-40-1} & \thead{VGG-13 \\ VGG-8} \\
\midrule
Teacher & 79.91 & 76.49 & 79.72 & 76.49 & 76.15 \\
Student & 78.79 & 73.33 & 71.90 & 71.49 & 71.10 \\
\midrule
KD~\cite{hinton2015distilling} & 80.22 $\pm$ 0.19 ($\mathcolor{red}{\downarrow}$) & 74.74 $\pm$ 0.17 ($\mathcolor{red}{\downarrow}$) & 73.32 $\pm$ 0.11 ($\mathcolor{red}{\downarrow}$) & 73.38 $\pm$ 0.47 ($\mathcolor{red}{\downarrow}$) & 73.68 $\pm$ 0.35 ($\mathcolor{red}{\downarrow}$) \\
FitNets~\cite{romero2014fitnets} & 80.32 $\pm$ 0.17 ($\mathcolor{ForestGreen}{\uparrow}$) & 74.94 $\pm$ 0.24 ($\mathcolor{ForestGreen}{\uparrow}$) & 73.55 $\pm$ 0.04 ($\mathcolor{ForestGreen}{\uparrow}$) & 73.54 $\pm$ 0.18 ($\mathcolor{ForestGreen}{\uparrow}$) & 73.91 $\pm$ 0.11 ($\mathcolor{ForestGreen}{\uparrow}$) \\
AT~\cite{zagoruyko2016paying} & 80.30 $\pm$ 0.12 ($\mathcolor{ForestGreen}{\uparrow}$) & 74.99 $\pm$ 0.15 ($\mathcolor{ForestGreen}{\uparrow}$) & \best{73.78 $\pm$ 0.15} ($\mathcolor{ForestGreen}{\uparrow}$) & 73.79 $\pm$ 0.16 ($\mathcolor{ForestGreen}{\uparrow}$) & 71.61 $\pm$ 0.12 ($\mathcolor{red}{\downarrow}$) \\
DKD~\cite{zhao2022decoupled} & 80.04 $\pm$ 0.36 ($\mathcolor{red}{\downarrow}$) & 75.08 $\pm$ 0.14 ($\mathcolor{ForestGreen}{\uparrow}$) & 73.69 $\pm$ 0.32 ($\mathcolor{ForestGreen}{\uparrow}$) & 73.63 $\pm$ 0.07 ($\mathcolor{ForestGreen}{\uparrow}$) & 73.99 $\pm$ 0.05 ($\mathcolor{ForestGreen}{\uparrow}$) \\
CRD~\cite{tian2019contrastive} & 80.71 $\pm$ 0.34 ($\mathcolor{ForestGreen}{\uparrow}$) & 74.76 $\pm$ 0.12 ($\mathcolor{ForestGreen}{\uparrow}$) & 73.15 $\pm$ 0.12 ($\mathcolor{red}{\downarrow}$) & 73.63 $\pm$ 0.33 ($\mathcolor{ForestGreen}{\uparrow}$) & 74.04 $\pm$ 0.18 ($\mathcolor{ForestGreen}{\uparrow}$) \\
CkptKD~\cite{jin2022efficient} & 73.94 $\pm$ 0.13 ($\mathcolor{red}{\downarrow}$) & 71.31 $\pm$ 0.06 ($\mathcolor{red}{\downarrow}$) & 72.17 $\pm$ 0.15 ($\mathcolor{red}{\downarrow}$) & 70.65 $\pm$ 0.15 ($\mathcolor{red}{\downarrow}$) & 70.48 $\pm$ 0.12 ($\mathcolor{red}{\downarrow}$) \\
\textbf{ASD (ours)} & \best{80.50 $\pm$ 0.07} ($\mathcolor{ForestGreen}{\uparrow}$) & \best{75.87 $\pm$ 0.29} ($\mathcolor{ForestGreen}{\uparrow}$) & 73.13 $\pm$ 0.12 ($\mathcolor{red}{\downarrow}$) & \best{74.16 $\pm$ 0.21} ($\mathcolor{ForestGreen}{\uparrow}$) & \best{74.26 $\pm$ 0.13} ($\mathcolor{ForestGreen}{\uparrow}$) \\
\bottomrule
\end{tabular}
}
\vspace{-4pt}
\caption{\small{Top-1 accuracy (\%) on CIFAR-100 for same-family teacher--student pairs. P1--P5 denote the five teacher--student pairs shown in the header. $\mathcolor{ForestGreen}{\uparrow}$\,/\,$\mathcolor{red}{\downarrow}$ indicates better/worse performance than KD. Grey bold marks the best result per column. Results are mean\,$\pm$\,std over 3 seeds.}}
\label{tbl:cifar100_same}
\end{center}
\vspace{-8pt}
\end{table}

Same-family pairs isolate capacity compression under shared architectural
bias (\cref{tbl:cifar100_same}). ASD improves over KD on 4 of 5 pairs (P1,
P2, P4, P5) and obtains the best accuracy on P2 (75.87\%), P4 ($74.16\%$) and P5
($74.26\%$). On P1, CRD edges ASD by $0.21$ pp on clean accuracy, but with
nearly $5\times$ larger seed variance ($\pm0.34$ vs.\ $\pm0.07$); we will
see in \cref{sec:robustness} that ASD's robustness on P1 is comparable to
CRD's while preserving lower seed-level variability.

The single same-family failure is P3
(ResNet-32$\times$4$\to$ResNet-8$\times$4), where ASD reaches $73.13\%$,
slightly below KD ($73.32\%$) and AT ($73.78\%$). This is the
most aggressively compressed student in our sweep. We discuss the capacity
boundary that this case represents in \cref{sec:p3}.

\vspace{-14pt}
\subsection{Cross-Architecture CIFAR-100 Results}
\label{sec:cross_arch}
\vspace{-2mm}
\begin{table}[ht]
\footnotesize
\setlength{\tabcolsep}{3.5pt}
\begin{center}
\resizebox{\linewidth}{!}{
\begin{tabular}{l|c|c|c|c|c}
\toprule
\thead{Teacher \\ Student} & \thead{WRN-40-2 \\ ShuffleNet-V2} & \thead{WRN-40-2 \\ ShuffleNet-V1} & \thead{ResNet-32$\times$4 \\ ShuffleNet-V1} & \thead{ResNet-32$\times$4 \\ ShuffleNet-V2} & \thead{VGG-13 \\ MobileNet-V2} \\
\midrule
Teacher & 76.49 & 76.49 & 79.72 & 79.72 & 76.15 \\
Student & 63.26 & 71.46 & 71.46 & 63.26 & 64.12 \\
\midrule
CE only & 63.26 $\pm$ 0.27 & 71.46 $\pm$ 0.38 & 71.46 $\pm$ 0.38 & 63.26 $\pm$ 0.27 & 64.12 $\pm$ 0.31 \\
KD~\cite{hinton2015distilling} & 66.72 $\pm$ 0.32 ($\mathcolor{red}{\downarrow}$) & 73.79 $\pm$ 0.26 ($\mathcolor{red}{\downarrow}$) & 73.09 $\pm$ 0.38 ($\mathcolor{red}{\downarrow}$) & 66.17 $\pm$ 0.18 ($\mathcolor{red}{\downarrow}$) & 65.86 $\pm$ 0.28 ($\mathcolor{red}{\downarrow}$) \\
FitNets~\cite{romero2014fitnets} & 67.07 $\pm$ 0.15 ($\mathcolor{ForestGreen}{\uparrow}$) & 74.27 $\pm$ 0.04 ($\mathcolor{ForestGreen}{\uparrow}$) & 73.20 $\pm$ 0.50 ($\mathcolor{ForestGreen}{\uparrow}$) & 66.07 $\pm$ 0.36 ($\mathcolor{red}{\downarrow}$) & 66.18 $\pm$ 0.22 ($\mathcolor{ForestGreen}{\uparrow}$) \\
AT~\cite{zagoruyko2016paying} & 67.66 $\pm$ 0.19 ($\mathcolor{ForestGreen}{\uparrow}$) & 73.97 $\pm$ 0.21 ($\mathcolor{ForestGreen}{\uparrow}$) & 73.25 $\pm$ 0.28 ($\mathcolor{ForestGreen}{\uparrow}$) & 66.28 $\pm$ 0.24 ($\mathcolor{ForestGreen}{\uparrow}$) & 66.38 $\pm$ 0.19 ($\mathcolor{ForestGreen}{\uparrow}$) \\
DKD~\cite{zhao2022decoupled} & 66.84 $\pm$ 0.31 ($\mathcolor{ForestGreen}{\uparrow}$) & 73.63 $\pm$ 0.39 ($\mathcolor{red}{\downarrow}$) & 72.33 $\pm$ 0.33 ($\mathcolor{red}{\downarrow}$) & 66.42 $\pm$ 0.28 ($\mathcolor{ForestGreen}{\uparrow}$) & 66.02 $\pm$ 0.24 ($\mathcolor{ForestGreen}{\uparrow}$) \\
CRD~\cite{tian2019contrastive} & 67.81 $\pm$ 0.42 ($\mathcolor{ForestGreen}{\uparrow}$) & 74.34 $\pm$ 0.47 ($\mathcolor{ForestGreen}{\uparrow}$) & 73.60 $\pm$ 0.16 ($\mathcolor{ForestGreen}{\uparrow}$) & 66.98 $\pm$ 0.36 ($\mathcolor{ForestGreen}{\uparrow}$) & 66.64 $\pm$ 0.31 ($\mathcolor{ForestGreen}{\uparrow}$) \\
CkptKD~\cite{jin2022efficient} & 67.27 $\pm$ 0.03 ($\mathcolor{ForestGreen}{\uparrow}$) & 72.66 $\pm$ 0.20 ($\mathcolor{red}{\downarrow}$) & 73.39 $\pm$ 0.01 ($\mathcolor{ForestGreen}{\uparrow}$) & 66.52 $\pm$ 0.08 ($\mathcolor{ForestGreen}{\uparrow}$) & 66.08 $\pm$ 0.14 ($\mathcolor{ForestGreen}{\uparrow}$) \\
\textbf{ASD (ours)} & \best{68.05 $\pm$ 0.14} ($\mathcolor{ForestGreen}{\uparrow}$) & \best{74.94 $\pm$ 0.09} ($\mathcolor{ForestGreen}{\uparrow}$) & \best{73.98 $\pm$ 0.27} ($\mathcolor{ForestGreen}{\uparrow}$) & \best{67.18 $\pm$ 0.19} ($\mathcolor{ForestGreen}{\uparrow}$) & \best{67.02 $\pm$ 0.18} ($\mathcolor{ForestGreen}{\uparrow}$) \\
\bottomrule
\end{tabular}
}
\vspace{-4pt}

\caption{\small{Top-1 accuracy (\%) on CIFAR-100 for cross-architecture teacher--student pairs. P6--P10 denote the five pairs shown in the header. $\mathcolor{ForestGreen}{\uparrow}$\,/\,$\mathcolor{red}{\downarrow}$ indicates better/worse performance than KD. Grey bold marks the best result per column. Results are mean\,$\pm$\,std over 3 seeds.}}

\label{tbl:cifar100_cross}
\end{center}
\vspace{-8pt}
\end{table}

ASD attains the best accuracy on \emph{all} five heterogeneous pairs
(\cref{tbl:cifar100_cross}), with the strongest absolute gain over KD on
WRN-40-2$\to$ShuffleNet-V2 ($+1.33$ pp). This is the regime where
push--pull supervision matters most: when teacher and student have
different representational bases, purely attractive transfer leaves the
student free to find architecture-specific shortcuts that reproduce teacher
predictions without recovering the teacher's robust geometry. ASD removes
this ambiguity by explicitly designating directions the student should not
occupy. ASD also exceeds CRD on every cross-architecture pair, including
the difficult WRN-40-2$\to$ShuffleNet-V2 case where it improves over CRD
by $0.24$ pp; sample-level contrastive negatives alone are not sufficient
in this regime.

\vspace{-4mm}
\subsection{Beyond CIFAR-100}
\label{sec:beyond_cifar}


\begin{table}[ht]
\footnotesize
\setlength{\tabcolsep}{5pt}
\begin{center}
\begin{tabular}{l|c|c|c}
\toprule
\thead{Teacher $\to$ Student \\ Dataset} &
  \thead{ResNet-50 $\to$ MobileNet-V2 \\ ImageNet-100} &
  \thead{ResNet-34 $\to$ ResNet-18 \\ TinyImageNet} &
  \thead{ResNet-50 $\to$ ResNet-18 \\ ImageNet-100} \\
\midrule
Teacher & 85.30 & 43.14 & 85.30 \\
Student & 80.21 & 42.98 & 83.85 \\
\midrule
CE only & 80.21 $\pm$ 0.07 & 42.98 $\pm$ 0.46 & 83.85 $\pm$ 0.15 \\
KD~\cite{hinton2015distilling} & 82.43 $\pm$ 0.41 ($\mathcolor{red}{\downarrow}$) & 44.36 $\pm$ 1.23 ($\mathcolor{red}{\downarrow}$) & 84.72 $\pm$ 0.35 ($\mathcolor{red}{\downarrow}$) \\
FitNets~\cite{romero2014fitnets} & 82.51 $\pm$ 0.30 ($\mathcolor{ForestGreen}{\uparrow}$) & 44.58 $\pm$ 0.83 ($\mathcolor{ForestGreen}{\uparrow}$) & 84.88 $\pm$ 0.22 ($\mathcolor{ForestGreen}{\uparrow}$) \\
AT~\cite{zagoruyko2016paying} & 82.90 $\pm$ 0.30 ($\mathcolor{ForestGreen}{\uparrow}$) & 44.42 $\pm$ 0.91 ($\mathcolor{ForestGreen}{\uparrow}$) & 84.92 $\pm$ 0.26 ($\mathcolor{ForestGreen}{\uparrow}$) \\
DKD~\cite{zhao2022decoupled} & 82.54 $\pm$ 0.18 ($\mathcolor{ForestGreen}{\uparrow}$) & 45.44 $\pm$ 1.11$^\dagger$ ($\mathcolor{ForestGreen}{\uparrow}$) & 84.85 $\pm$ 0.18 ($\mathcolor{ForestGreen}{\uparrow}$) \\
CRD~\cite{tian2019contrastive} & 83.01 $\pm$ 0.18 ($\mathcolor{ForestGreen}{\uparrow}$) & 45.02 $\pm$ 1.08 ($\mathcolor{ForestGreen}{\uparrow}$) & 85.24 $\pm$ 0.19 ($\mathcolor{ForestGreen}{\uparrow}$) \\
CkptKD~\cite{jin2022efficient} & 82.70 $\pm$ 0.30 ($\mathcolor{ForestGreen}{\uparrow}$) & 44.75 $\pm$ 1.02 ($\mathcolor{ForestGreen}{\uparrow}$) & 84.80 $\pm$ 0.28 ($\mathcolor{ForestGreen}{\uparrow}$) \\
\textbf{ASD (ours)} & \best{83.24 $\pm$ 0.13} ($\mathcolor{ForestGreen}{\uparrow}$) & \best{46.00 $\pm$ 1.45} ($\mathcolor{ForestGreen}{\uparrow}$) & \best{85.48 $\pm$ 0.15} ($\mathcolor{ForestGreen}{\uparrow}$) \\
\bottomrule
\end{tabular}
\vspace{4pt}
\caption{\small{Top-1 accuracy (\%) on ImageNet-100 and TinyImageNet. P11--P13 denote the teacher--student pairs shown in the header. $\mathcolor{ForestGreen}{\uparrow}$/$\mathcolor{red}{\downarrow}$ indicates above/below KD. Grey bold marks the best result per column. Results are mean\,$\pm$\,std over 3 seeds.}}
\label{tbl:beyond_cifar}
\end{center}
\vspace{-8pt}
\end{table}

\vspace{-3mm}
\Cref{tbl:beyond_cifar} tests whether the temporal shortcut signal remains
useful at higher resolution and class count. ASD obtains the best accuracy
on all 3 pairs ($83.24\%$ on P11, $46.00\%$ on P12, $85.48\%$ on P13).
The TinyImageNet result has notably larger seed variance ($\pm1.45$) than
the CIFAR-100 results, reflecting a less stable optimization landscape on
this dataset. The mean is still highest, and ASD wins on 3/3 seeds against
CRD with paired RNG initialization (per-seed margins $+0.62$, $+1.14$,
$+1.18$ pp; see supplementary for the seed-level analysis).

\vspace{-5mm}
\subsection{Semantic Segmentation on ADE20K}
\label{sec:segmentation}

\begin{table}[t]
\footnotesize
\centering
\setlength{\tabcolsep}{3pt}
\begin{tabular}{ll|c|c|c|c|c|c|c|c|c}
\toprule
\multirow{2}{*}{Teacher} & \multirow{2}{*}{Student}
  & \multicolumn{2}{c}{From Scratch}
  & \multicolumn{2}{c}{KD~\cite{hinton2015distilling}}
  & \multicolumn{2}{c}{Af-DCD~\cite{fan2023augmentation}}
  & \multicolumn{1}{c}{UFD-KD~\cite{lu2025ufd}}
  & \multicolumn{2}{c}{\textbf{ASD (ours)}} \\
\cmidrule(lr){3-4}\cmidrule(lr){5-6}\cmidrule(lr){7-8}\cmidrule(lr){9-9}\cmidrule(lr){10-11}
 & & T.\,mIoU & S.\,mIoU & pixAcc & mIoU & pixAcc & mIoU & mIoU & pixAcc & mIoU \\
\midrule
ResNet-101 & ResNet-18
  & 42.70 & 33.91
  & 76.32 & 34.88
  & 77.43 & 36.21
  & 36.89
  & 78.92 & \textbf{38.87} \\
ResNet-101 & MobileNetV2
  & 42.70 & 34.32
  & 76.74 & 34.92
  & 77.28 & 35.97
  & 36.24
  & 78.65 & \textbf{37.91} \\
\bottomrule
\end{tabular}

\caption{ADE20K semantic segmentation with DeepLabv3. Results are reported as pixAcc/mIoU (\%) using $512{\times}512$ inputs and 40k iterations. From-Scratch reports teacher/student mIoU; KD, Af-DCD~\cite{fan2023augmentation}, and UFD-KD~\cite{lu2025ufd} follow~\cite{lu2025ufd}. Bold denotes best mIoU per row.}

\vspace{-3mm}
\label{tbl:ade20k}
\end{table}

\vspace{-2mm}
To test whether the temporal shortcut signal transfers beyond
classification, we evaluate ASD on dense prediction. Both ASD losses act on
penultimate features, so they extend to segmentation without modification:
we apply $\Ltc$ and $\Lss$ to the backbone feature map of a

DeepLabv3~\cite{chen2018encoder} network, treating each spatial location as
a sample when forming the temporal displacement
$\Dh = \hearly - \hfinal$ and the uncentered shortcut covariance. We follow
the protocol of~\cite{lu2025ufd}: a ResNet-101 teacher distilled into
ResNet-18 and MobileNetV2 students, $512\times512$ inputs, 40k iterations,
SGD with momentum $0.9$ and learning rate $0.02$, reporting pixel accuracy
and mean IoU on the ADE20K~\cite{zhou2017scene} validation set.

\Cref{tbl:ade20k} reports the results. ASD improves both students over the
KD baseline, the dense contrastive method Af-DCD~\cite{fan2023augmentation},
and the recent frequency-decoupling method UFD-KD~\cite{lu2025ufd}, reaching
$38.87$ mIoU for ResNet-18 ($+1.98$ over UFD-KD, $+3.99$ over the from-scratch
student) and $37.91$ mIoU for MobileNetV2 ($+1.67$ over UFD-KD). The gain is
consistent with the mechanistic picture from classification: dense
prediction is especially exposed to texture and local-statistic shortcuts,
so suppressing the early-teacher displacement subspace and contrasting
against the early teacher state transfers the same shape-biased,
shortcut-resistant structure to the per-pixel features. ASD requires no
segmentation-specific component beyond pooling the two frozen teacher
checkpoints' feature maps, which keeps the added cost in line with the
classification setting. t-SNE and qualitative mask visualizations
(Supplementary Section A.10) corroborate these gains: the ASD student forms
tighter, better-separated per-class clusters and recovers sharper object
boundaries than the KD student.

\vspace{-5mm}
\subsection{Corruption Robustness}
\label{sec:robustness}
\vspace{-3mm}
\begin{table}[ht]
\footnotesize
\setlength{\tabcolsep}{3pt}
\begin{center}
\resizebox{\linewidth}{!}{
\begin{tabular}{l|c|c|c|c|c|c}
\toprule
\thead{Teacher \\ Student} & \thead{ResNet-34 \\ ResNet-18} & \thead{WRN-40-2 \\ WRN-16-2} & \thead{ResNet-32$\times$4 \\ ResNet-8$\times$4} & \thead{WRN-40-2 \\ WRN-40-1} & \thead{VGG-13 \\ VGG-8} & \thead{WRN-40-2 \\ ShuffleNet-V2} \\
\midrule
CE only & 87.0 $\pm$ 0.3 ($\mathcolor{red}{\uparrow}$) & 95.9 $\pm$ 0.2 ($\mathcolor{red}{\uparrow}$) & 99.6 $\pm$ 0.4 ($\mathcolor{red}{\uparrow}$) & 93.7 $\pm$ 0.4 ($\mathcolor{red}{\uparrow}$) & 87.0 $\pm$ 0.3 ($\mathcolor{red}{\uparrow}$) & 88.5 $\pm$ 0.4 ($\mathcolor{red}{\uparrow}$) \\
KD~\cite{hinton2015distilling} & 86.2 $\pm$ 0.4 ($\mathcolor{red}{\uparrow}$) & 93.7 $\pm$ 0.4 ($\mathcolor{red}{\uparrow}$) & 98.1 $\pm$ 0.3 ($\mathcolor{red}{\uparrow}$) & 92.2 $\pm$ 0.6 ($\mathcolor{red}{\uparrow}$) & 85.6 $\pm$ 0.3 ($\mathcolor{red}{\uparrow}$) & 86.9 $\pm$ 0.4 ($\mathcolor{red}{\uparrow}$) \\
FitNets~\cite{romero2014fitnets} & 86.0 $\pm$ 0.4 ($\mathcolor{ForestGreen}{\downarrow}$) & 93.6 $\pm$ 0.1 ($\mathcolor{ForestGreen}{\downarrow}$) & 98.0 $\pm$ 0.3 ($\mathcolor{ForestGreen}{\downarrow}$) & 92.1 $\pm$ 0.3 ($\mathcolor{ForestGreen}{\downarrow}$) & 85.5 $\pm$ 0.2 ($\mathcolor{ForestGreen}{\downarrow}$) & 86.8 $\pm$ 0.3 ($\mathcolor{ForestGreen}{\downarrow}$) \\
AT~\cite{zagoruyko2016paying} & 85.7 $\pm$ 0.6 ($\mathcolor{ForestGreen}{\downarrow}$) & 93.6 $\pm$ 0.5 ($\mathcolor{ForestGreen}{\downarrow}$) & 97.8 $\pm$ 0.5 ($\mathcolor{ForestGreen}{\downarrow}$) & 92.0 $\pm$ 0.4 ($\mathcolor{ForestGreen}{\downarrow}$) & 85.5 $\pm$ 0.3 ($\mathcolor{ForestGreen}{\downarrow}$) & 87.4 $\pm$ 0.2 ($\mathcolor{red}{\uparrow}$) \\
DKD~\cite{zhao2022decoupled} & 85.9 $\pm$ 0.9 ($\mathcolor{ForestGreen}{\downarrow}$) & \best{93.4 $\pm$ 0.3} ($\mathcolor{ForestGreen}{\downarrow}$) & 97.5 $\pm$ 0.2 ($\mathcolor{ForestGreen}{\downarrow}$) & 92.4 $\pm$ 0.3 ($\mathcolor{red}{\uparrow}$) & 85.5 $\pm$ 0.1 ($\mathcolor{ForestGreen}{\downarrow}$) & 87.4 $\pm$ 0.5 ($\mathcolor{red}{\uparrow}$) \\
CRD~\cite{tian2019contrastive} & \best{85.2 $\pm$ 0.6} ($\mathcolor{ForestGreen}{\downarrow}$) & 93.8 $\pm$ 0.3 ($\mathcolor{red}{\uparrow}$) & 98.2 $\pm$ 0.1 ($\mathcolor{red}{\uparrow}$) & 92.7 $\pm$ 0.6 ($\mathcolor{red}{\uparrow}$) & 85.9 $\pm$ 0.5 ($\mathcolor{red}{\uparrow}$) & 86.7 $\pm$ 0.3 ($\mathcolor{ForestGreen}{\downarrow}$) \\
CkptKD~\cite{jin2022efficient} & 90.9 $\pm$ 0.1 ($\mathcolor{red}{\uparrow}$) & 99.1 $\pm$ 0.2 ($\mathcolor{red}{\uparrow}$) & \best{96.6 $\pm$ 0.2} ($\mathcolor{ForestGreen}{\downarrow}$) & 98.7 $\pm$ 0.2 ($\mathcolor{red}{\uparrow}$) & 90.0 $\pm$ 0.2 ($\mathcolor{red}{\uparrow}$) & 92.2 $\pm$ 0.1 ($\mathcolor{red}{\uparrow}$) \\
\textbf{ASD (ours)} & 85.6 $\pm$ 0.4 ($\mathcolor{ForestGreen}{\downarrow}$) & 94.2 $\pm$ 0.4 ($\mathcolor{red}{\uparrow}$) & 97.9 $\pm$ 0.2 ($\mathcolor{ForestGreen}{\downarrow}$) & \best{91.7 $\pm$ 0.4} ($\mathcolor{ForestGreen}{\downarrow}$) & \best{85.2 $\pm$ 0.3} ($\mathcolor{ForestGreen}{\downarrow}$) & \best{86.1 $\pm$ 0.4} ($\mathcolor{ForestGreen}{\downarrow}$) \\
\bottomrule
\end{tabular}
}
\vspace{-4pt}
\caption{\small{Mean Corruption Error (mCE, \%) on CIFAR-100-C at severity 5. Lower is better. $\mathcolor{ForestGreen}{\downarrow}$\,/\,$\mathcolor{red}{\uparrow}$ indicates improvement/degradation vs.\ KD. Grey bold marks the lowest mCE per column. Results are mean\,$\pm$\,std over 3 seeds.}}

\label{tbl:robustness}
\end{center}
\vspace{-8pt}
\end{table}

Robustness (\cref{tbl:robustness}) tests whether reduced shortcut projection translates into reduced sensitivity to corrupted local
statistics. ASD obtains the lowest mCE on P4 ($91.7$), P5 ($85.2$), and
the most challenging cross-architecture pair P6 ($86.1$). These are pairs
where ASD also performs strongly on clean accuracy, suggesting that
shortcut suppression does not simply trade accuracy for robustness; when
the student has sufficient capacity, it redirects the representation
toward features that improve both.
\vspace{-5mm}
\paragraph{P1: narrower trade-off.}
On P1, CRD has the lowest mCE ($85.2$); ASD reaches $85.6$, still
improving over KD ($86.2$). The two reduce error differently: CRD
strengthens instance-level geometry, ASD penalizes temporal shortcut
directions.
\vspace{-5mm}
\paragraph{Structural failure of CkptKD.}
CkptKD degrades robustness severely on five of six pairs (P1:
$86.2\to90.9$; P2: $93.7\to99.1$; P6: $86.9\to92.2$). The reason is
structural: CkptKD treats an early checkpoint as an \emph{attractive}
teacher, codifying its texture- and background-biased features. ASD uses
the same source in the opposite direction and obtains the opposite effect.
\vspace{-5mm}
\paragraph{Honest negative: P2.}
On WRN-40-2$\to$WRN-16-2, ASD reaches $94.2$ mCE, above KD ($93.7$) and
DKD ($93.4$): the dominant error mode here is class-boundary calibration
rather than shortcut reliance, where DKD's logit separation is more
effective (per-category decomposition in Supplementary Sec.~A.9.2). We
report this without exclusion.

\vspace{-5mm}
\subsection{The P3 Capacity Boundary}
\label{sec:p3}
\vspace{-2mm}
P3 (ResNet-32$\times$4$\to$ResNet-8$\times$4) is the only same-family
pair where ASD trails KD on clean accuracy, and the only pair where the
negative-source ablation (\cref{sec:ablation_negative_source}) prefers
in-batch negatives. ResNet-8$\times$4 has $6\times$ fewer parameters than
its teacher, and for such aggressive compression some low-level cues
function as necessary discriminative scaffolding rather than dispensable
shortcuts. This is the practical boundary of ASD's applicability: shortcut
suppression is most reliable when the student can \emph{replace} shortcut
cues with semantic structure. Consistently, CkptKD which encourages
retention of early-teacher features has its only robustness
improvement on P3, (Supplementary Sec.~A.9.1).

\vspace{-6mm}
\section{Ablation Studies}
\label{sec:ablations}

\begin{table*}[t]
\centering
\footnotesize
\setlength{\tabcolsep}{4pt}
\renewcommand{\arraystretch}{1.05}

\begin{minipage}[t]{0.47\textwidth}
\vspace{0pt}
\centering
\begin{tabular}{l|cc}
\toprule
\textbf{Method}
& \textbf{Clean Acc (\%)} $\uparrow$
& \textbf{mCE (\%)} $\downarrow$ \\
\midrule
KD~\cite{hinton2015distilling}
& $80.22 \pm 0.19$ (${\color{red}\downarrow}$)
& $86.2 \pm 0.4$ ({\color{red}$\uparrow$}) \\
ASD w/o $\mathcal{L}_{SS}$
& $80.19 \pm 0.07$ ({\color{red}$\downarrow$})
& $85.8 \pm 0.7$ ({\color{green!50!black}$\downarrow$}) \\
ASD w/o $\mathcal{L}_{TC}$
& $80.15 \pm 0.19$ ({\color{red}$\downarrow$})
& $85.9 \pm 0.4$ ({\color{green!50!black}$\downarrow$}) \\
\rowcolor{gray!15}
\textbf{ASD (ours)}
& $\mathbf{80.50 \pm 0.07}$ ({\color{green!50!black}$\uparrow$})
& $\mathbf{85.6 \pm 0.4}$ ({\color{green!50!black}$\downarrow$}) \\
\bottomrule
\end{tabular}

\vspace{1mm}
\caption{Loss ablation on P1. Removing $\mathcal{L}_{SS}$ keeps only temporal contrast, while removing $\mathcal{L}_{TC}$ keeps only shortcut suppression.}
\label{tbl:ablation_components}
\end{minipage}
\hfill
\begin{minipage}[t]{0.47\textwidth}
\vspace{0pt}
\centering
\begin{tabular}{c|c|c}
\toprule
\textbf{$T_{\mathrm{early}}$ epoch}
& \textbf{\% training}
& \textbf{Clean Acc (\%)} \\
\midrule
12 & 5\%  & 80.25 \\
24 & 10\% & 80.26 \\
\rowcolor{gray!15}
36 & 15\% & $\mathbf{80.50 \pm 0.07}$ $\leftarrow$ default \\
48 & 20\% & 80.16 \\
72 & 30\% & 80.56 \\
\bottomrule
\end{tabular}

\vspace{1mm}
\caption{Sensitivity of ASD to the early checkpoint epoch $T_{\mathrm{early}}$ on P1. Epoch 36 is adopted as the default setting.}
\label{tbl:ablation_tearly}
\end{minipage}

\vspace{-2mm}
\end{table*}

\vspace{-2mm}

\subsection{Loss Components}
\label{sec:ablation_components}

\Cref{tbl:ablation_components} decomposes ASD into temporal contrast and
shortcut suppression on P1. Removing $\Lss$ (leaving $\Ltc$ only) gives
$80.19\%$ clean accuracy with $85.8$ mCE; removing $\Ltc$ (leaving $\Lss$
only) gives $80.15\%$ and $85.9$ mCE; the full model reaches $80.50\%$ and
$85.6$ mCE. The pattern supports the intended division of labour: $\Ltc$
mainly improves discriminative transfer by making the final teacher more
probable than the early teacher in representation space, while $\Lss$ acts
more directly on robustness by constraining the student's projection onto
shortcut eigendirections. Either component alone improves part of the
objective, but neither fully reconstructs the push--pull geometry. Their
combination is not redundant: one term defines the temporal contrast, the
other enforces a geometric exclusion region.

\vspace{-3mm}

\subsection{\texorpdfstring{$\Tearly$}{T\_early} Sensitivity}
\label{sec:ablation_tearly}

\Cref{tbl:ablation_tearly} shows ASD is stable across the $10\%$--$30\%$
early-to-middle range: epoch 36 (fixed 15\% default) gives
$80.50\pm0.07$, epoch 72 gives $80.56\%$, and only epoch 12 is weaker. This
stability makes the default practical: $\Tearly$ is a zero-cost
byproduct of training $\Tfinal$ (Supplementary Sec.~A.8), and if only
$\Tfinal$ is available, a 10-epoch high-LR fine-tune gives an early-like
reference ($80.18\%$ on P1), close to KD ($80.22\%$) and the true early
checkpoint ($80.27\%$; Supplementary Sec.~A.4.6).

\vspace{-3mm}
\subsection{Negative Source for \texorpdfstring{$\Ltc$}{L\_TC}}
\label{sec:ablation_negative_source}
\vspace{-3mm}
\begin{table}[ht]
\footnotesize
\setlength{\tabcolsep}{4pt}
\begin{center}
\resizebox{\linewidth}{!}{
\begin{tabular}{ll|c|c|c|c|c}
\toprule
\thead{Negative \\ Source} & \thead{Objective} &
  \thead{ResNet-34 \\ ResNet-18} &
  \thead{WRN-40-2 \\ WRN-16-2} &
  \thead{ResNet-32$\times$4 \\ ResNet-8$\times$4} &
  \thead{WRN-40-2 \\ ShuffleNet-V2} &
  \thead{VGG-13 \\ VGG-8} \\
\midrule
In-batch ($i \neq j$) & CE + KD 
& 80.22 $\pm$ 0.19 
& 74.74 $\pm$ 0.17 
& \best{73.32 $\pm$ 0.11} 
& 66.72 $\pm$ 0.32 
& 73.68 $\pm$ 0.35 \\
$T_{\rm early}$ (ours) & ASD 
& \best{80.50 $\pm$ 0.07} 
& \best{75.87 $\pm$ 0.29} 
& 73.13 $\pm$ 0.12 
& \best{68.05 $\pm$ 0.14} 
& \best{74.26 $\pm$ 0.13} \\
\bottomrule
\end{tabular}
}
\caption{\small{Negative-source ablation for $\mathcal{L}_{\rm TC}$ on five CIFAR-100 pairs. $T_{\rm early}$ provides structured temporal negatives and improves 4/5 pairs; P3 reflects low shortcut reliance. Grey bold marks the best result per column. Results are mean\,$\pm$\,std over 3 seeds.}}
\label{tbl:ablation_negative_source}
\end{center}
\vspace{-12pt}
\end{table}

\Cref{tbl:ablation_negative_source} most directly defends the novelty
claim against CRD. Replacing the same-sample temporal negative with random
in-batch negatives reduces $\Ltc$ to a CRD-style objective and removes the
teacher trajectory signal entirely. On 4 of 5 pairs, using $\Tearly$ as
the structured negative outperforms in-batch negatives, with the largest
gain on the cross-architecture pair WRN-40-2$\to$ShuffleNet-V2
($+1.33$ pp).
The exception is again P3, consistent with the capacity boundary discussed
in \cref{sec:p3}. The pattern supports the interpretation that
cross-architecture students benefit most from knowing not only what to
imitate, but also which teacher state to avoid.

\begin{figure}[t]
  \centering
  \includegraphics[
    width=0.98\columnwidth,
  ]{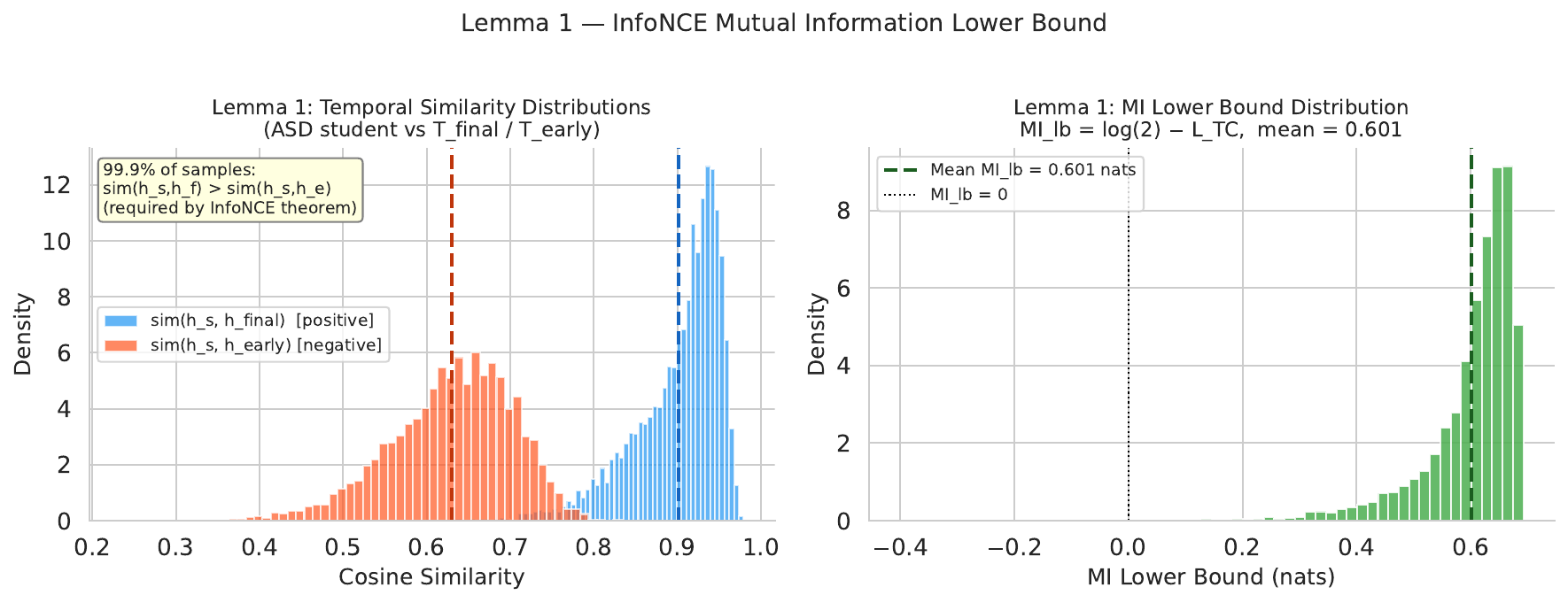}
  {\footnotesize\textbf{(a)} Temporal alignment ($\Ltc$)}
  \includegraphics[
    width=0.9\columnwidth,
  ]{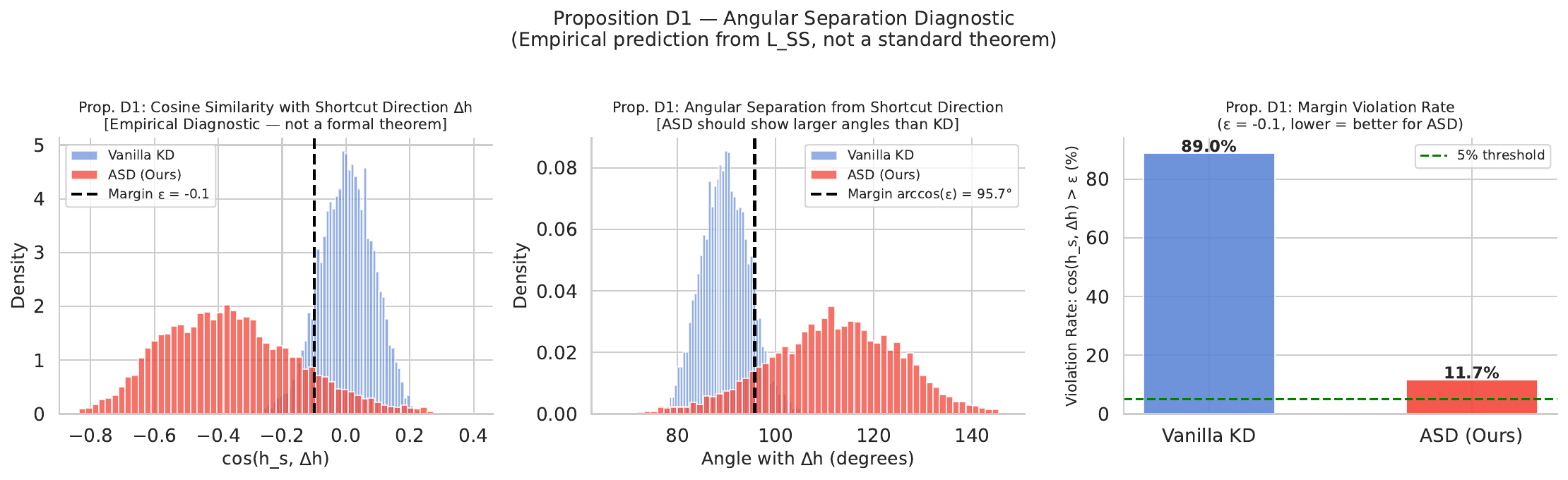}
  
  {\footnotesize\textbf{(b)} Shortcut-direction separation ($\Lss$)}
  \caption{\emph{Alignment and shortcut-direction diagnostics on P1.}
  ASD aligns the student with $\Tfinal$ over $\Tearly$ for $99.9\%$ of samples
  with an InfoNCE bound of $0.601$ nats, and produces a negative shortcut cosine
  compared with KD ($-0.361$ vs.\ $0.003$).}
  \label{fig:mech_alignment_shortcut}
  \vspace{-8pt}
\end{figure}


\vspace{-5mm}
\subsection{Mechanistic Validation}
\label{sec:mechanistic_validation}
We verify that the learned student actually follows the geometry the
method prescribes. All diagnostics use P1 (ResNet-34$\to$ResNet-18 on
CIFAR-100); full procedures, bootstrap stability, and the texture-shift
protocol are in Supplementary Sec.~A.3.
\Cref{fig:mech_alignment_shortcut} and \cref{fig:mech_subspace_robustness}
consolidate the four core diagnostics. The mechanism is not pair-specific: Supplementary Sec.~A.3.7 repeats
the diagnostics on P5, P3, and P12, in addition to P1. ASD reproduces the
same signature on every pair: higher $\Tfinal$ than $\Tearly$ alignment
(e.g., $0.90/0.61$ on P1 and $0.52/0.34$ on P12) and reduced shortcut
magnitude ($-16$ to $-28\%$), with CIFAR-100-C gains concentrated on
texture/local-statistic corruptions.

\vspace{-4mm}
\paragraph{Temporal alignment ($\Ltc$ diagnostic,
\cref{fig:mech_alignment_shortcut}a):}
At convergence, $99.9\%$ of samples satisfy
$\mathrm{sim}(\hs,\hfinal) > \mathrm{sim}(\hs,\hearly)$, confirming that
$\Ltc$ specifically moves the student closer to the converged teacher than
to the early teacher. The binary-diagnostic InfoNCE lower bound is $0.601$
nats, close to its hard binary ceiling of $\log 2 \approx 0.693$ nats; we
read this as a directional indicator that the student systematically
prefers $\Tfinal$, not as an absolute mutual-information claim. A
non-bound, full-set diagnostic is in
Supplementary Sec.~A.3.6.

\vspace{-4mm}
\paragraph{Anti-alignment with the shortcut direction
($\Lss$ diagnostic, \cref{fig:mech_alignment_shortcut}b):}
The signed cosine $\cos(\hs,\Dh)$ has mean $-0.361$ for ASD versus $0.003$
for KD. Under the diagnostic threshold $\varepsilon_{\mathrm{diag}} =
-0.1$, $89.0\%$ of KD samples fall on the shortcut-aligned side, against
$11.7\%$ for ASD. ASD does not merely fail to align with $\Dh$ it is
\emph{anti}-aligned; the representational signature predicted by $\Ltc$.

\vspace{-5mm}
\paragraph{Subspace geometry (\cref{thm:shortcut_id} diagnostic,
\cref{fig:mech_subspace_robustness}a):}
The principal angle between the shortcut subspace
$\mathrm{span}(\Uk)$ and the $\Tfinal$-PCA robust subspace is
$65.3^{\circ}$, against $38.6^{\circ}$ to the $\Tearly$-PCA subspace.
The shortcut subspace lies substantially closer to the early
teacher than to the final teacher, consistent with 
\cref{thm:shortcut_id,eq:shortcut_energy}. The ASD student's projection
onto the robust subspace has mean magnitude $0.45$ versus $0.12$ for KD:
ASD does not merely preserve the robust subspace, it re-allocates capacity
toward it, explaining why clean accuracy and robustness improve together.

\vspace{-5mm}
\paragraph{Texture-shift diagnostic (\cref{fig:mech_subspace_robustness}b):}
Under a stylized-input perturbation following~\cite{geirhos2019imagenet},
ASD obtains $80.3\%$ clean and $49.1\%$ accuracy (degradation
$-31.12\%$), versus $80.2\% / 48.1\%$ for KD ($-32.11\%$). The gap is
modest but directionally consistent: reduced shortcut projection reduces
sensitivity to texture perturbation without eliminating it.

\vspace{-5mm}
\paragraph{Subspace stability: }
The shortcut covariance has a small eigengap ($0.059$ absolute, $0.029$
relative), so individual eigenvectors are not stably oriented; however,
bootstrap resampling gives an inter-resample principal angle of
$32.6^{\circ} \pm 1.6^{\circ}$   the \emph{region} of shortcut activation
is reproducible even though the eigenbasis is not. As $\Lss$ depends on
subspace projection, not eigenvector orientation, this is sufficient
(Supplementary Sec.~A.7).

\begin{figure}[t]
  \centering

  \includegraphics[
    width=\linewidth,
    trim=1mm 2mm 1mm 18mm,
    clip
  ]{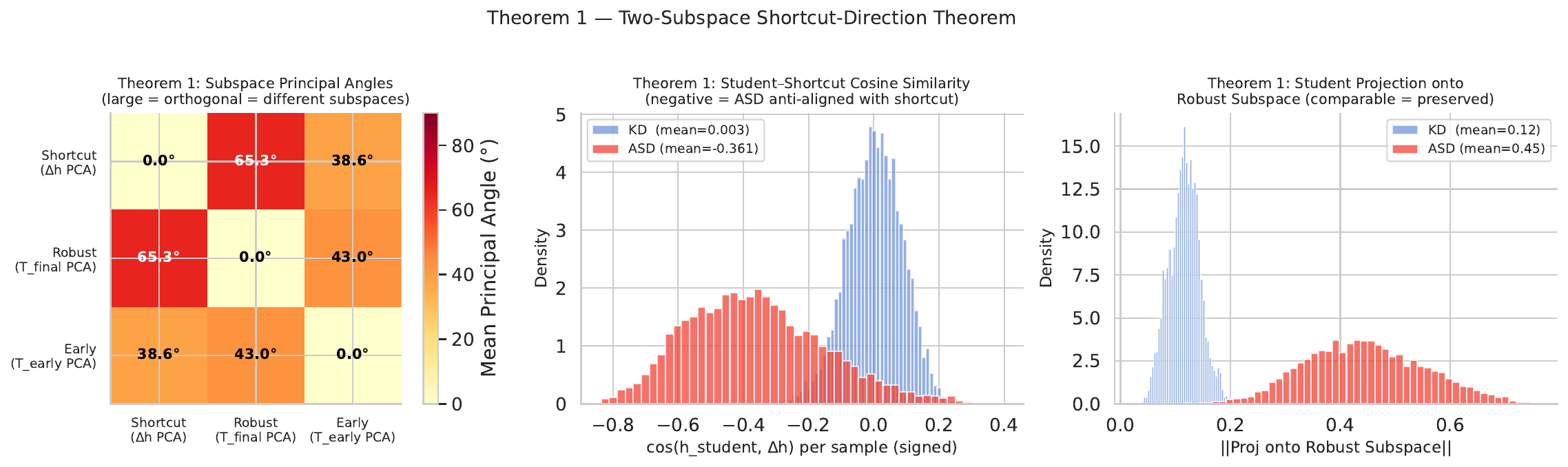}

  \vspace{-2pt}
  {\footnotesize\textbf{(a)} Subspace geometry (\cref{thm:shortcut_id}) \label{}}


  \includegraphics[
    width=0.62\linewidth,
    trim=1mm 2mm 1mm 18mm,
    clip
  ]{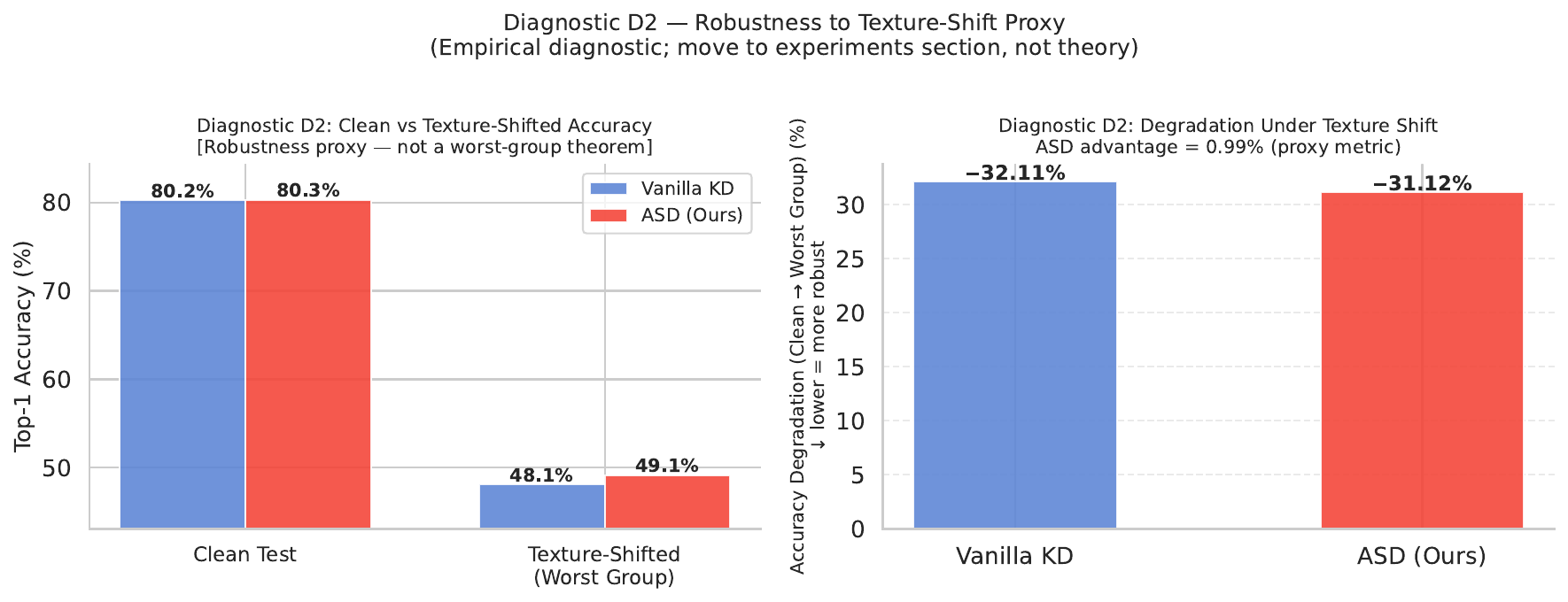}

  {\footnotesize\textbf{(b)} Texture-shift robustness} \label{Fig:texture-shift}

  \vspace{-2pt}
\caption{\emph{Subspace and robustness diagnostics on P1.}
ASD separates shortcut directions, improves robust-subspace projection over KD ($0.45$ vs.\ $0.12$), and slightly reduces texture-shift degradation ($-31.12\%$ vs.\ $-32.11\%$).}
  \label{fig:mech_subspace_robustness}

  \vspace{-12pt}
\end{figure}
\vspace{-5mm}
\section{Conclusion}
\label{sec:conclusion}
\vspace{-4mm}
We presented Anti-Shortcut Distillation, which converts a teacher's
training trajectory into both attractive and repulsive supervision. ASD treats $\Tfinal$ as a positive anchor and $\Tearly$ as a temporal
negative; $\Ltc$ and $\Lss$ align the student with converged teacher
features while suppressing projection onto shortcut-prone directions. Across 13
teacher--student pairs, ASD attains the best clean accuracy on 10 and
improves over KD on 12, obtains the lowest mCE on the hardest
cross-architecture pair ($86.1$), and is mechanistically anti-aligned with
the shortcut direction ($\cos(\hs,\Dh) = -0.36$ vs.\ $0.00$ for KD) while
tripling robust-subspace projection ($0.45$ vs.\ $0.12$).
\vspace{-5mm}
\paragraph{Limitations and future work:} ASD stores one extra frozen checkpoint and computes a per-batch
eigendecomposition, a small marginal cost. P3 highlights a capacity
boundary: severely compressed students may still need low-level cues for
discrimination. A transformer-student result (DeiT-Tiny, Supplementary Sec.~A.4.7) suggests the signal is not CNN-specific; large-model and self-supervised
validation remain open alongside a
tighter $\Ltc$--$\Lss$ characterization.

\section*{Acknowledgment}
This work was supported by Institute of Information and communications Technology Planning and Evaluation (IITP) grant funded by the Korea government (MSIT) (No. RS-2024-00443391, Korea-Japan Joint International Research on Deep Learning-based Autonomous Mobility Control Technology for Autonomous Mobile Robots).

\bibliography{bib_additions}
\end{document}


\maketitle

\noindent This supplementary material contains:
\cref{sec:supp_proofs}, full proofs of the propositions and theorem stated
in the main paper;
\cref{sec:supp_extended_theory}, an extended theoretical analysis
(gradient geometry of both losses and a first-order robustness bound);
\cref{sec:supp_validation}, implementation and procedure for the
mechanistic diagnostics;
\cref{sec:supp_extra_ablations}, additional hyperparameter ablations
(subspace dimension $K$, margin $\varepsilon$, memory size $M$, warmup
duration);
\cref{sec:supp_repro}, the full reproduction protocol;
\cref{sec:supp_significance}, a paired-seed significance analysis for
TinyImageNet;
\cref{sec:supp_subspace_stability}, an extended discussion of Davis--Kahan
subspace stability;
\cref{sec:supp_complexity}, computational and memory complexity; and
\cref{sec:supp_neg_results}, an extended analysis of negative results; and
\cref{sec:supp_ade20k_vis}, ADE20K t-SNE and qualitative-mask visualizations.
Further material added during the rebuttal is marked in blue: multi-pair
diagnostics and a per-corruption breakdown (Sec.~A.3.7), the teacher
feature-energy trajectory (Sec.~A.3.8, Fig.~A.6), targeted shortcut
benchmarks with per-seed results (Sec.~A.3.9), a batch-size ablation
(Sec.~A.4.5), the rewound-reference experiment (Sec.~A.4.6), and a
transformer student (Sec.~A.4.7).
We do not restate the main-paper definitions; cross-references use the
labels from the main paper.

\section{Proofs}
\label{sec:supp_proofs}

\subsection{Proof of Proposition~1 (InfoNCE Alignment) of the Main Paper}
\label{sec:supp_proof_infonce}

\emph{Restated formally.} Let
$(\hs(x), \hfinal(x)) \sim p_{S,F}$ be drawn from the joint distribution
over positives, and let $\mathcal{N}_F = \{f_1, \ldots, f_N\}$ with
$f_n \stackrel{\mathrm{iid}}{\sim} p_{\hfinal}$, independent of the
positive pair. Define the population reduced loss
\begin{equation}
  \widetilde{\Ltc}(x; \mathcal{N}_F) \;=\; -\log
  \frac{ \exp\!\left( \hs(x)^{\top} \hfinal(x) / \tau_c \right) }{
  \exp\!\left( \hs(x)^{\top} \hfinal(x) / \tau_c \right) +
  \sum_{n=1}^{N} \exp\!\left( \hs(x)^{\top} f_n / \tau_c \right) }.
  \label{eq:supp_ltc_reduced}
\end{equation}
Then~\cite{oord2018representation,poole2019variational}:
\begin{equation}
  I(\hs; \hfinal) \;\geq\; \log(N+1) - \E_{x, \mathcal{N}_F}
  \left[ \widetilde{\Ltc}(x; \mathcal{N}_F) \right].
  \label{eq:supp_infonce_bound}
\end{equation}

\begin{proof}
We follow the standard
derivation~\cite{oord2018representation,poole2019variational}.
Consider $N+1$ samples: the true positive $\hfinal(x)$ drawn from
$p(\hfinal \mid \hs(x))$, and $N$ negatives drawn from $p(\hfinal)$. The
Bayes-optimal classifier identifying the positive index assigns probability
\begin{equation}
  p^{*}(j \mid \hs(x), \{f_i\}_{i=0}^{N}) \;=\;
  \frac{ p(f_j \mid \hs(x)) / p(f_j) }{
  \sum_{i=0}^{N} p(f_i \mid \hs(x)) / p(f_i) },
\end{equation}
where $f_0 = \hfinal(x)$ is the true positive at index $0$. Using the
parametric model
$\exp(\hs(x)^{\top} f / \tau_c) \propto p(f \mid \hs(x)) / p(f)$, the
softmax in \cref{eq:supp_ltc_reduced} approximates $p^{*}$. The cross-entropy
of the indicator at index $0$ against $p^{*}$ is then
$-\log p^{*}(0) = \widetilde{\Ltc}$ in expectation. Standard
arguments~\cite{oord2018representation}, Theorem~1 give
\begin{equation}
  \E[\widetilde{\Ltc}] \;\geq\; -I(\hs;\hfinal) + \log(N+1),
\end{equation}
which rearranges to \cref{eq:supp_infonce_bound}.

\emph{Adding the temporal negative.}
$\Ltc$ adds the extra denominator term
$\exp(\hs(x)^{\top}\hearly(x)/\tau_c)$ inside the softmax in
\cref{eq:supp_ltc_reduced}. For any non-negative term added to the
denominator of a softmax, the resulting log-probability of the
positive class decreases, i.e.\ $\Ltc \geq \widetilde{\Ltc}$ pointwise.
Taking expectations preserves the inequality:
$\E[\Ltc] \geq \E[\widetilde{\Ltc}]$. Substituting into
\cref{eq:supp_infonce_bound} gives
\begin{equation}
  I(\hs; \hfinal) \;\geq\; \log(N+1) - \E[\widetilde{\Ltc}]
  \;\geq\; \log(N+1) - \E[\Ltc].
\end{equation}
\end{proof}

\noindent\emph{Discussion of bound tightness.} The InfoNCE bound has a
known saturation gap of up to $\log(N+1)$ nats; with $N=4159$ for our
default setting ($B-1=63$ in-batch plus $M=4096$ memory negatives), the
ceiling is $\log(4160) \approx 8.33$ nats, so the bound is uninformative
for absolute MI quantification but useful as a relative monotonic
indicator. We emphasise that minimizing $\Ltc$ raises the bound and
simultaneously \emph{decreases} similarity to $\Tearly$ through the extra
denominator term, at no cost to the bound. For non-bound MI estimators
(e.g.\ MINE~\cite{belghazi2018mine}), see \cref{sec:supp_validation}.

\subsection{Proof of Proposition~2 (Angular Margin) of the Main Paper}
\label{sec:supp_proof_margin}

\begin{proof}
Let $\hSbar(x_i)$ be unit-norm and let $\Uk = [u_1,\dots,u_K]$ have
orthonormal columns spanning the shortcut subspace $\mathcal{U} :=
\mathrm{span}(\Uk)$. The orthogonal projector onto $\mathcal{U}$ is
$P = \Uk\Uk^{\top}$, and the projection of $\hSbar(x_i)$ onto $\mathcal{U}$
is $P\hSbar(x_i)$. Because $\Uk$ has orthonormal columns,
\begin{equation}
  \norm{P\hSbar(x_i)}_2^2
  = \hSbar(x_i)^{\top}\Uk\Uk^{\top}\Uk\Uk^{\top}\hSbar(x_i)
  = \hSbar(x_i)^{\top}\Uk\Uk^{\top}\hSbar(x_i)
  = \norm{\Uk^{\top}\hSbar(x_i)}_2^2,
\end{equation}
using $\Uk^{\top}\Uk = I_K$. Hence
$\norm{\hSbar(x_i)^{\top}\Uk}_2 = \norm{P\hSbar(x_i)}_2$.
Let $\theta(x_i)\in[0,\tfrac{\pi}{2}]$ denote the angle between
$\hSbar(x_i)$ and its projection $P\hSbar(x_i)$ (the nearest point of
$\mathcal{U}$ to $\hSbar(x_i)$). Since $\hSbar(x_i)$ is unit-norm,
$\norm{P\hSbar(x_i)}_2 = \cos\theta(x_i)$ \emph{exactly}. Therefore
\begin{equation}
  \norm{\hSbar(x_i)^{\top}\Uk}_2 \;=\; \cos\theta(x_i).
  \label{eq:supp_proj_cos}
\end{equation}
By construction of the hinge, $\Lss(x_i)=0$ iff
$\norm{\hSbar(x_i)^{\top}\Uk}_2 \le \varepsilon$. Substituting
\cref{eq:supp_proj_cos} gives $\cos\theta(x_i) \le \varepsilon$, and since
$\arccos$ is decreasing on $[0,1]$, $\theta(x_i)\ge\arccos(\varepsilon)$.
For $\varepsilon = 0.1$, $\arccos(0.1) = 84.26^{\circ}$.
\end{proof}

\noindent\emph{Remark (relation to the diagnostic threshold).} The training
margin $\varepsilon$ acts on the \emph{unsigned} subspace-projection
magnitude $\norm{\hSbar^{\top}\Uk}_2 \in [0,1]$, so any value
$\varepsilon\in(0,1)$ defines a genuine angular margin. The diagnostic
threshold $\varepsilon_{\mathrm{diag}}=-0.1$ used in
\cref{sec:supp_validation} is a different object: it thresholds the
\emph{signed} cosine $\cos(\hs,\Dh)\in[-1,1]$ against a single direction
$\Dh$, and its negative sign reflects that a successfully trained ASD
student is \emph{anti}-aligned with $\Dh$. The two should not be conflated.

\subsection{Proof of Theorem~4 (Shortcut Subspace Identification) of the Main Paper}
\label{sec:supp_proof_thm1}

\textbf{Step 1: Exact decomposition under orthogonality.}
Write $\Dh(x) = \Dh_r(x) + \Dh_s(x)$, where
$\Dh_r(x) = r_{\mathrm{early}}(x) - r_{\mathrm{final}}(x) \in \Vrob$ and
$\Dh_s(x) = s_{\mathrm{early}}(x) - s_{\mathrm{final}}(x) \in \Vsc$. Choose
orthonormal bases $\{e_1, \ldots, e_p\}$ of $\Vrob$ and
$\{e_{p+1}, \ldots, e_{d_T}\}$ of $\Vsc$, where $p = \dim(\Vrob)$. In this
basis, $\Dh_r$ has nonzero coordinates only in positions $1, \ldots, p$,
and $\Dh_s$ has nonzero coordinates only in positions $p+1, \ldots, d_T$.

Expanding $\E[\Dh\Dh^{\top}] = \E[(\Dh_r + \Dh_s)(\Dh_r + \Dh_s)^{\top}]$
and writing
$\E[\Dh_r\Dh_s^{\top}]_{ij}$ for the $(i,j)$ entry,
we obtain
\begin{equation}
  \E[\Dh_r\Dh_s^{\top}]_{ij} \;=\; \E[(\Dh_r)_i (\Dh_s)_j],
\end{equation}
which vanishes unless $i \in \{1,\ldots,p\}$ \emph{and}
$j \in \{p+1,\ldots,d_T\}$. But $(\Dh_s)_j = 0$ for $j \leq p$, and
$(\Dh_r)_i = 0$ for $i > p$, so $\E[\Dh_r\Dh_s^{\top}]$ is the zero matrix.
The same holds for $\E[\Dh_s\Dh_r^{\top}]$. Therefore
\begin{equation}
  \E[\Dh\Dh^{\top}] \;=\; G_r \oplus G_s
  \;=\; \begin{bmatrix} G_r & 0 \\ 0 & G_s \end{bmatrix},
  \label{eq:supp_block_diag}
\end{equation}
block-diagonal in the basis $\Vrob \cup \Vsc$.

\textbf{Step 2: Eigen structure of a block-diagonal matrix.}
The eigenvalues of a block-diagonal symmetric matrix are the union of the
eigenvalues of the blocks, and each eigenvector lies entirely within one
block~\cite[Thm.~7.1.2]{golub2013matrix}. Order the joint eigenvalues
$\mu_1 \geq \mu_2 \geq \ldots$ Under the assumption
$\lambda_{K}(G_s) > \norm{G_r}_2 = \lambda_1(G_r)$, the $K$ largest joint
eigenvalues are exactly $\lambda_1(G_s) \geq \ldots \geq \lambda_K(G_s)$,
all belonging to the $\Vsc$ block. Their eigenvectors form an orthonormal
basis of the top-$K$ eigenspace of $G_s$ inside $\Vsc$. Hence the top-$K$
eigenspace of $\E[\Dh\Dh^{\top}]$ lies entirely in $\Vsc$.

\textbf{Step 3: Empirical (finite-sample perturbation).}
The training loss uses the per-batch $\Cschat$ of
Eq.~(4) of the main paper in place of the population matrix
$\Csc = \E[\Dh\Dh^{\top}]$. Suppose the displacements $\Dh(x_i)$ are i.i.d.\
with $\norm{\Dh}_2 \le R$ almost surely (penultimate features are bounded in
practice). The matrix Bernstein inequality~\cite{tropp2015introduction}
gives, with probability at least $1-\delta$,
\begin{equation}
  \norm{\Csc - \Cschat}_2
  \;\le\;
  C\!\left( \sqrt{\frac{R^2\,\norm{\Csc}_2 \log(d_T/\delta)}{B}}
  + \frac{R^2 \log(d_T/\delta)}{B} \right),
  \label{eq:supp_bernstein}
\end{equation}
for a universal constant $C$, so $\norm{\Csc - \Cschat}_2 =
O_p(\sqrt{\log d_T / B})$ for fixed feature scale. Let $\Ukhat$ be the
top-$K$ eigenspace of $\Cschat$ and $\Uk^{\star}$ the top-$K$ eigenspace of
$\Csc$, which lies in $\Vsc$ by Step 2. The Davis--Kahan
sin-$\Theta$ theorem~\cite{davis1970rotation} in the statistical variant
of~\cite{yu2015useful} yields
\begin{equation}
  \norm{ \sin\Theta(\Ukhat, \Uk^{\star}) }_2
  \;\le\;
  \frac{ \norm{ \Csc - \Cschat }_2 }{ \lambda_K(\Csc) - \lambda_{K+1}(\Csc) }
  \;=\;
  \frac{ \norm{ \Csc - \Cschat }_2 }{ \lambda_K(G_s) - \lambda_1(G_r) },
  \label{eq:supp_davis_kahan}
\end{equation}
where the final equality uses $\lambda_{K+1}(\Csc) = \lambda_1(G_r)$ from the
block structure \cref{eq:supp_block_diag} (the $(K{+}1)$-th largest
population eigenvalue is the leading robust eigenvalue). The denominator is
the \emph{shortcut excess eigenvalue} guaranteed positive by the
shortcut-energy condition $\lambda_K(G_s) > \norm{G_r}_2$ (Eq.~(13) of the
main paper). Combining \cref{eq:supp_bernstein,eq:supp_davis_kahan}, the
empirical subspace converges to $\Uk^{\star}\subset\Vsc$ at rate
$O_p\!\big(\sqrt{\log d_T / B}\,/\,(\lambda_K(G_s)-\lambda_1(G_r))\big)$.

\textbf{Step 4: Approximate orthogonality (perturbed assumption).}
Steps 1--3 assume $\Vrob \perp \Vsc$ exactly. If the subspaces are only
approximately orthogonal, write
$\norm{\E[\Dh_r\Dh_s^{\top}]}_2 \le \delta_{\perp}$ for the residual
cross-covariance. Then the population matrix is no longer block-diagonal;
instead $\Csc = (G_r \oplus G_s) + E$ with
$\norm{E}_2 \le 2\delta_{\perp}$, where $E$ collects the two off-diagonal
cross-blocks. Let $\Uk^{\mathrm{blk}}$ denote the top-$K$ eigenspace of the
idealized block-diagonal matrix $G_r \oplus G_s$ (which lies in $\Vsc$ by
Step 2) and recall $\Uk^{\star}$ is the top-$K$ eigenspace of the true
population $\Csc$. Applying Davis--Kahan to the perturbation $E$,
\begin{equation}
  \norm{ \sin\Theta(\Uk^{\star}, \Uk^{\mathrm{blk}}) }_2
  \;\le\;
  \frac{ 2\delta_{\perp} }{ \lambda_K(G_s) - \lambda_1(G_r) }.
  \label{eq:supp_perturb_orth}
\end{equation}
The triangle inequality for the $\sin\Theta$ metric~\cite{stewart1990matrix}
then combines \cref{eq:supp_davis_kahan,eq:supp_perturb_orth} into a single
bound on the distance between the \emph{empirical} subspace $\Ukhat$ and the
\emph{ideal shortcut} subspace $\Uk^{\mathrm{blk}}\subset\Vsc$:
\begin{equation}
  \norm{ \sin\Theta(\Ukhat, \Uk^{\mathrm{blk}}) }_2
  \;\le\;
  \frac{ \norm{\Csc - \Cschat}_2 + 2\delta_{\perp} }
       { \lambda_K(G_s) - \lambda_1(G_r) }
  \;=\;
  O_p\!\left( \frac{ \sqrt{\log d_T / B} + \delta_{\perp} }
                  { \lambda_K(G_s) - \lambda_1(G_r) } \right).
  \label{eq:supp_combined}
\end{equation}

\emph{Conclusion.}
Under the assumptions, the top-$K$ eigenspace of $\E[\Dh\Dh^{\top}]$ lies
in $\Vsc$ exactly (Step 2), and the empirical top-$K$ eigenspace deviates
from $\Vsc$ by a controllable amount (Steps 3--4). Because $\Lss$
penalizes $\norm{\hSbar^{\top}\Uk}_2$, minimizing $\Lss$ reduces the
student's projection onto an approximation of $\Vsc$, completing the
proof. \qed

\section{Extended Theoretical Analysis}
\label{sec:supp_extended_theory}

This section develops three results that are stated only briefly, or not at
all, in the main paper: (i) the gradient geometry of $\Ltc$, making the
push--pull dynamics explicit; (ii) the gradient geometry of $\Lss$, showing
it performs projected gradient descent away from the shortcut subspace; and
(iii) a first-order bound formalizing the robustness hypothesis of
Section~4.4 of the main paper.

\subsection{Gradient Geometry of the Temporal Contrastive Loss}
\label{sec:supp_tc_gradient}

We show that the gradient of $\Ltc$ with respect to the normalized student
feature decomposes into an attractive term toward $\Tfinal$ and a repulsive
term away from $\Tearly$ and the other negatives, with data-dependent
weights given by the softmax assignment. Write the per-sample logits as
$s_F = \zs^{\top}\zf/\tau_c$ (positive), $s_E = \zs^{\top}\ze/\tau_c$
(temporal negative), and $s_k = \zs^{\top}\nu_k/\tau_c$ for the remaining
final-teacher negatives $\nu_k \in \mathcal{N}_F$. Let
\begin{equation}
  p_F = \frac{e^{s_F}}{Z},\quad
  p_E = \frac{e^{s_E}}{Z},\quad
  p_k = \frac{e^{s_k}}{Z},\qquad
  Z = e^{s_F} + e^{s_E} + \sum_{k} e^{s_k},
\end{equation}
be the softmax assignment over the positive and all negatives, so $\Ltc =
-\log p_F$.

\begin{proposition}[Push--pull gradient decomposition]
\label{prop:supp_tc_grad}
The gradient of $\Ltc$ with respect to the normalized student feature
$\zs$ is
\begin{equation}
  \frac{\partial \Ltc}{\partial \zs}
  = -\frac{1}{\tau_c}\Big[
    \underbrace{(1-p_F)\,\zf}_{\text{attract to }\Tfinal}
    \;-\; \underbrace{p_E\,\ze}_{\text{repel from }\Tearly}
    \;-\; \underbrace{\textstyle\sum_k p_k\,\nu_k}_{\text{repel from negatives}}
  \Big].
  \label{eq:supp_tc_grad}
\end{equation}
\end{proposition}

\begin{proof}
With $\Ltc = -s_F + \log Z$,
$\partial \Ltc/\partial \zs = -\partial s_F/\partial \zs
+ \partial \log Z/\partial \zs$. Each logit is linear in $\zs$:
$\partial s_F/\partial \zs = \zf/\tau_c$, $\partial s_E/\partial\zs =
\ze/\tau_c$, $\partial s_k/\partial\zs = \nu_k/\tau_c$. Differentiating the
log-partition,
\begin{equation}
  \frac{\partial \log Z}{\partial \zs}
  = \frac{1}{\tau_c}\Big( p_F\,\zf + p_E\,\ze + \sum_k p_k\,\nu_k \Big).
\end{equation}
Subtracting $\partial s_F/\partial\zs = \zf/\tau_c$ and grouping the $\zf$
terms gives the coefficient $(p_F - 1)$ on $\zf$, and $+p_E,+p_k$ on the
negatives, which is \cref{eq:supp_tc_grad}.
\end{proof}

\noindent\emph{Interpretation.} The descent direction $-\partial\Ltc/\partial\zs$
pulls $\zs$ toward $\zf$ with strength $(1-p_F)$ — large only when the
positive is not yet correctly assigned — and pushes it away from $\ze$ with
strength $p_E$, the probability mass the model currently (incorrectly)
places on the temporal negative. The temporal negative therefore exerts
\emph{exactly} the repulsion needed to overcome residual similarity to the
early teacher; once $p_E\to0$ the early teacher stops influencing the
gradient. This is the precise sense in which $\Ltc$ implements push--pull,
and it is absent from CRD~\cite{tian2019contrastive}, whose negatives are
all drawn from the same final teacher and thus carry no temporal direction.
Projecting onto the unit sphere (the features are $\ell_2$-normalized)
removes the radial component, leaving only the tangential update; this does
not change the decomposition's sign structure.

\subsection{Gradient Geometry of the Shortcut Suppression Loss}
\label{sec:supp_ss_gradient}

\begin{proposition}[$\Lss$ performs projected descent off the shortcut subspace]
\label{prop:supp_ss_grad}
Fix a sample with active hinge, i.e.\ $\rho := \norm{\hSbar^{\top}\Uk}_2 >
\varepsilon$, and let $P = \Uk\Uk^{\top}$ be the projector onto the shortcut
subspace. The gradient of the per-sample term $\ell_{\mathrm{SS}} =
\rho - \varepsilon$ with respect to $\hSbar$ is
\begin{equation}
  \frac{\partial \ell_{\mathrm{SS}}}{\partial \hSbar}
  = \frac{P\,\hSbar}{\norm{P\,\hSbar}_2},
  \label{eq:supp_ss_grad}
\end{equation}
a unit vector lying in $\mathrm{span}(\Uk)$. The descent step therefore moves
$\hSbar$ along $-P\hSbar$, i.e.\ it removes exactly the shortcut-subspace
component of the feature and leaves the orthogonal complement
$(I-P)\hSbar$ unchanged to first order.
\end{proposition}

\begin{proof}
Write $\rho = \norm{\Uk^{\top}\hSbar}_2 = (\hSbar^{\top}P\hSbar)^{1/2}$,
using $P=\Uk\Uk^{\top}=P^2=P^{\top}$. Then
\begin{equation}
  \frac{\partial \rho}{\partial \hSbar}
  = \frac{1}{2\rho}\cdot 2P\hSbar
  = \frac{P\hSbar}{\rho}
  = \frac{P\hSbar}{\norm{P\hSbar}_2},
\end{equation}
since $\rho = \norm{P\hSbar}_2$ (idempotence of $P$). The hinge contributes
this gradient whenever $\rho>\varepsilon$ and zero otherwise, giving
\cref{eq:supp_ss_grad}. The vector $P\hSbar\in\mathrm{span}(\Uk)$ by
definition, and $(I-P)\hSbar$ is unaffected because
$\partial[(I-P)\hSbar]/\partial\hSbar = (I-P)$ is orthogonal to the update
direction $P\hSbar$.
\end{proof}

\noindent\emph{Interpretation.} $\Lss$ is a projected penalty: it acts only
inside $\mathrm{span}(\Uk)$ and is inert on the orthogonal complement that
carries the robust semantic content. This is the mechanism behind the
empirical observation (Section~6.4 of the main paper) that suppressing the
shortcut subspace does \emph{not} reduce — and in fact increases — the
student's projection onto the robust subspace: the two subspaces are
orthogonal by Assumption~3 of the main paper, so the $\Lss$ gradient cannot
remove robust-subspace energy, while the freed representational capacity is
reallocated by $\Lce$ and $\Lkd$ toward the discriminative robust directions.

\subsection{First-Order Robustness Bound}
\label{sec:supp_robustness_bound}

We now formalize the design hypothesis of Section~4.4 of the main paper.
Let the classifier head be $g(h) = W h + b$ acting on the (un-normalized)
penultimate feature $h\in\R^{d_T}$, and consider a distribution shift that
displaces the feature by $\eta(x)$, so the shifted feature is
$h'(x) = h(x) + \eta(x)$. Assume the shift is shortcut-aligned:
$\eta(x)\in\mathrm{span}(\Uk)$ with $\norm{\eta(x)}_2\le\Delta$.

\begin{proposition}[Shortcut-aligned shift sensitivity]
\label{prop:supp_robustness}
For any input $x$, the change in the logit vector under the shift satisfies
\begin{equation}
  \norm{g(h'(x)) - g(h(x))}_2
  \;=\; \norm{W\,\eta(x)}_2
  \;\le\; \norm{W P}_2 \,\Delta,
  \label{eq:supp_robustness}
\end{equation}
where $P=\Uk\Uk^{\top}$ projects onto the shortcut subspace. Consequently,
if two students share the same head norm $\norm{W}_2$ but the ASD student
concentrates less of its head's operator norm on the shortcut subspace —
$\norm{W_{\mathrm{ASD}}P}_2 < \norm{W_{\mathrm{KD}}P}_2$ — then the ASD
student has a smaller worst-case logit perturbation under any
shortcut-aligned shift of magnitude $\Delta$.
\end{proposition}

\begin{proof}
Linearity of $g$ gives $g(h')-g(h)=W\eta$. Since
$\eta\in\mathrm{span}(\Uk)$, $\eta = P\eta$, so $W\eta = WP\eta$ and
$\norm{W\eta}_2 \le \norm{WP}_2\norm{\eta}_2 \le \norm{WP}_2\Delta$ by
sub-multiplicativity and $\norm{\eta}_2\le\Delta$.
\end{proof}

\noindent\emph{Connection to $\Lss$ and the linear-probe interpretation.}
Proposition~\ref{prop:supp_robustness} bounds sensitivity by $\norm{WP}_2$,
the alignment of the \emph{classifier head} with the shortcut subspace,
whereas $\Lss$ directly constrains $\norm{P\hSbar}_2$, the alignment of the
\emph{feature}. The two are linked when the head is trained on the features:
if the student feature distribution has negligible variance inside
$\mathrm{span}(\Uk)$ (which is what small $\Lss$ enforces, by
Proposition~2 of the main paper), then any head minimizing the task risk gains
nothing from placing weight on $\mathrm{span}(\Uk)$, and a ridge-regularized
head drives $\norm{WP}_2\to0$. Thus reducing feature-level shortcut
projection indirectly reduces head-level shortcut sensitivity, which is the
quantity that controls the robustness gap in \cref{eq:supp_robustness}. We
state this as a hypothesis rather than a theorem because it requires the
head to be (approximately) risk-optimal and locally linear; the empirical
texture-shift and CIFAR-100-C results (Sections~5.5 and 6.4 of the main
paper) are the operative evidence.

\section{Mechanistic-Validation Procedure}
\label{sec:supp_validation}

We describe how each diagnostic in Section~6.4 (Mechanistic Validation) of
the main paper is computed. All diagnostics use P1
(ResNet-34 teacher, ResNet-18 student) on CIFAR-100. Features are
extracted from the penultimate layer on a fixed evaluation pool of 5{,}000
samples drawn from the CIFAR-100 test set, with a fixed seed for the
sampling.

\subsection{Signed Cosine \texorpdfstring{$\cos(\hs, \Dh)$}{cos(h\_S, Delta h)}}

For each sample $x_i$:
(1) Extract $\hs(x_i), \hfinal(x_i), \hearly(x_i)$ from the trained ASD
(or KD) student and the two frozen teachers.
(2) Compute $\Dh(x_i) = \hearly(x_i) - \hfinal(x_i)$.
(3) Compute the signed cosine
$\cos\bigl(\hs(x_i), \Dh(x_i)\bigr) =
\hs(x_i)^{\top}\Dh(x_i) /
(\norm{\hs(x_i)}_2 \norm{\Dh(x_i)}_2)$.
The reported violation rate is the fraction of samples with signed cosine
exceeding $\varepsilon_{\mathrm{diag}} = -0.1$ (i.e.\ insufficiently
anti-aligned).

\subsection{Robust-Subspace Projection}

The robust subspace $\Vrob^{\mathrm{emp}}$ is approximated as the span of
the top-$K_{\mathrm{rob}}=8$ principal components of $\hfinal$ features
across the 5{,}000-sample pool. We compute the unit-vector projection
\begin{equation}
  \pi_{\mathrm{rob}}(x_i) \;=\; \norm{\hSbar(x_i)^{\top} U_{\Vrob^{\mathrm{emp}}}}_2,
\end{equation}
and report its mean across the pool for both ASD and KD students.

\subsection{Subspace Principal Angles}

Principal angles between two $K$-dimensional subspaces $A$ and $B$ are
computed via the SVD of $A^{\top}B$, where $A, B$ are orthonormal bases.
If $\sigma_1 \geq \ldots \geq \sigma_K$ are the singular values, the
principal angles are $\theta_k = \arccos(\sigma_k)$. We report the mean
of $\theta_1, \ldots, \theta_K$ in degrees. The three subspaces compared
are: (a) shortcut $\mathrm{span}(\Uk)$ from $\Cschat$ with $K=4$;
(b) robust $\Vrob^{\mathrm{emp}}$ from $\Tfinal$ PCA with $K=4$;
(c) early-teacher PCA $\mathrm{span}(\hearly)$, top-$K=4$.

\subsection{Texture-Shift Diagnostic}

Following the texture-bias protocol of~\cite{geirhos2019imagenet}, we
construct a stylized variant of the CIFAR-100 test set by applying
AdaIN-based style transfer~\cite{huang2017arbitrary} with style images
sampled from the CIFAR-100 train set of \emph{different} classes than the
input. This perturbs local texture statistics while preserving global
shape. The reported numbers are top-1 accuracy on clean and stylized
inputs, evaluated on the full 10{,}000-sample test set.

\subsection{Bootstrap Subspace Stability}

We split the CIFAR-100 train set into 10 disjoint random subsets of equal
size and compute $\Cschat$ on each. For each of the $\binom{10}{2} = 45$
ordered pairs, we extract the top-$K=4$ eigenspaces and compute their
mean principal angle. The reported $32.6^{\circ} \pm 1.6^{\circ}$ is the
mean and standard deviation across the 45 pair comparisons.

\subsection{Non-Bound MI Estimator (Supplementary Diagnostic)}

To complement the binary-saturating InfoNCE diagnostic reported in the
main paper, we also evaluate the full-set InfoNCE estimator with all
$|\mathcal{N}_F| = 4159$ in-batch and memory-bank final-teacher negatives.
The bound is
$\log(|\mathcal{N}_F| + 1) - \E[\widetilde{\Ltc}] \approx 8.33 - 7.21 =
1.12$ nats, substantially above the binary $0.601$ nats. Both quantities
are lower bounds and remain subject to the $\log(N+1)$ saturation
ceiling~\cite{poole2019variational}. For absolute MI estimation, we
recommend MINE~\cite{belghazi2018mine}; we report the InfoNCE bounds for
consistency with the training objective.

\subsection{Multi-Pair Diagnostics and Corruption Breakdown}
\label{supp:multipair}
To verify the mechanism is not specific to P1, we repeat the signed-cosine
(Sec.~A.3.1) and robust-projection (Sec.~A.3.2) diagnostics on P5
(VGG-13$\to$VGG-8),---P3 (ResNet-32$\times$4$\to$ResNet-8$\times$4), and
P12 (ResNet-34$\to$ResNet-18 on TinyImageNet), in addition to P1, using the
identical 5,000-sample protocol on each dataset's test set.
\Cref{tbl:supp_multipair} reports the results: the ASD anti-alignment
signature and the increased robust-subspace projection reproduce on every
reported pair.

\begin{table}[h]
\centering\small
\begin{tabular}{@{}llcccc@{}}
\toprule
 & & \multicolumn{2}{c}{align.\ $T_{\mathrm{final}}$/$T_{\mathrm{early}}$} & shortcut & shape \\
Pair & data & KD & ASD & mag.$\downarrow$ & KD/ASD \\
\midrule
P1 R34$\to$R18 & C100 & $.31/.33$ & $.90/.61$ & $-15.8\%$ & $47.4/\mathbf{49.0}$ \\
P5 VGG13$\to$VGG8 & C100 & $.30/.25$ & $.75/.52$ & $-20.5\%$ & $38.3/\mathbf{39.0}$ \\
P3 R32$\times$4$\to$R8$\times$4 & C100 & $.38/.41$ & $.80/.73$ & $-28.3\%$ & $33.1/\mathbf{34.1}$ \\
P12 R34$\to$R18 & Tiny & $.42/.37$ & $.52/.34$ & $-17.0\%$ & --- \\
\bottomrule
\end{tabular}
\caption{Mechanistic diagnostics across four teacher--student pairs
and two datasets (post-hoc; no retraining). ASD aligns to $T_{\mathrm{final}}$
with a clear final$>$early margin (vs.\ neutral KD), suppresses the
shortcut subspace ($16$--$28\%$), and (CIFAR) is more shape-biased. Alignment
and shortcut-magnitude are measured directly from $\Delta h$ (no
$T_{\mathrm{final}}$-PCA proxy), hence not circular.}
\label{tbl:supp_multipair}
\end{table}

\noindent\textbf{Per-corruption-type breakdown.} Grouping the 15
CIFAR-100-C corruptions into texture/local-statistic (contrast, elastic,
pixelate, JPEG), noise (Gaussian, shot, impulse), blur (defocus, glass,
motion, zoom), and weather (snow, frost, fog, brightness) categories, ASD's
improvement over KD concentrates in the texture/local-statistic and weather
groups (P1: $-0.3$ texture/local, $-2.0$ weather; P6: $-1.3$ texture/local,
$-3.4$ weather, $\Delta$CE), while the noise group is approximately unchanged
($+0.4$ on P1, $-0.5$ on P6) --- consistent with a shortcut-suppression
mechanism.

\subsection{Feature-Energy Trajectory over Teacher Training}
\label{supp:energy}
To test the central hypothesis directly, we track the fraction of
teacher penultimate-feature energy inside the \emph{final} shortcut
subspace $\mathrm{span}(\Uk)$ and inside the robust ($\Tfinal$-PCA)
subspace at seven checkpoints of the P1 ResNet-34 teacher
(\cref{fig:supp_energy}). Shortcut-subspace energy peaks exactly at the
15\% ($\Tearly$) checkpoint ($0.253$ at epoch 35) and declines ${\sim}49\%$
to convergence ($0.128$), while robust-subspace energy stays flat
(${\sim}0.104$--$0.129$) throughout: shortcut components attenuate over
training while semantic components are comparatively stable, and the 15\%
checkpoint is precisely where shortcut energy is maximal.

\begin{table}[h]
\centering\small
\begin{tabular}{@{}lccccccc@{}}
\toprule
epoch & 5 & 20 & \textbf{35 ($\Tearly$)} & 90 & 150 & 210 & 240 \\
\midrule
shortcut energy & $.178$ & $.160$ & $\mathbf{.253}$ & $.153$ & $.160$ & $.129$ & $.128$ \\
robust energy   & $.111$ & $.112$ & $.129$ & $.121$ & $.117$ & $.104$ & $.106$ \\
\bottomrule
\end{tabular}
\caption{Measured feature-energy trajectory of the P1 teacher
(ResNet-34, 240 epochs). Shortcut-subspace energy peaks at $\Tearly$
(epoch 35) and decays to convergence; robust-subspace energy is stable.}
\label{tbl:supp_energy}
\end{table}

\setcounter{figure}{5}
\begin{figure}[h]
\centering
\includegraphics[width=0.8\linewidth]{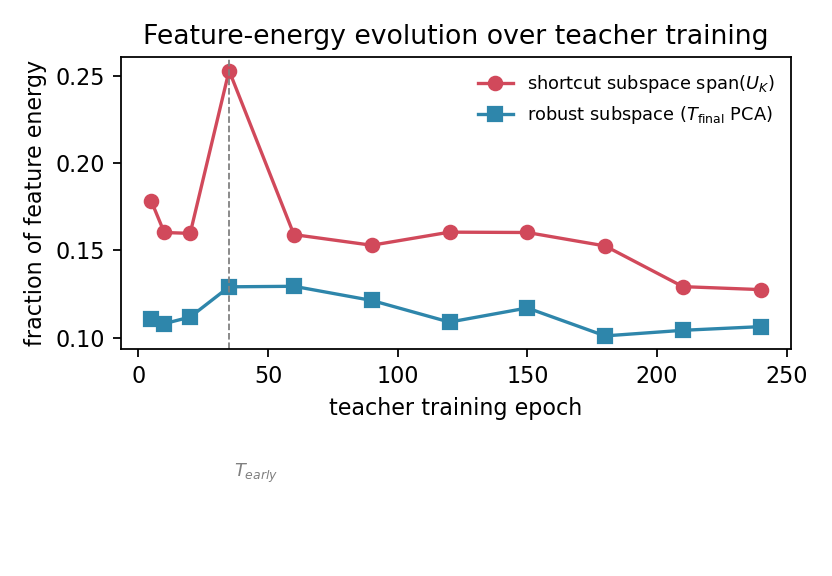}
\caption{Feature-energy evolution over teacher training (P1
teacher, ResNet-34; measured, plotted from \cref{tbl:supp_energy}). The
fraction of penultimate-feature energy inside the final shortcut subspace
$\mathrm{span}(\mathbf{U}_K)$ \emph{peaks at the 15\%
($T_{\mathrm{early}}$) checkpoint ($0.25$)} and declines ${\sim}49\%$ to
convergence ($0.13$), while energy inside the robust
$T_{\mathrm{final}}$-PCA subspace stays flat (${\sim}0.11$--$0.13$),
supporting the two-subspace hypothesis and the choice of the 15\%
checkpoint as $T_{\mathrm{early}}$.}
\label{fig:supp_energy}
\end{figure}
\setcounter{figure}{0}

\subsection{Targeted Shortcut Benchmarks: Protocol and Per-Seed Results}
\label{supp:targeted}
This section documents the protocol and per-seed results behind the
targeted shortcut benchmarks of Sec.~5.7 of the main paper (requested by
dTfF). All evaluations reuse the trained ImageNet-100
ResNet-50$\to$MobileNet-V2 students (3 seeds), with \emph{no retraining}.
\textbf{ImageNet-R} is restricted to the 19 classes overlapping
ImageNet-100 (2{,}286 images); \textbf{ImageNet-A} to the 15 overlapping
classes (680 images); both use standard 100-way prediction with top-1
computed on the overlapping-class images, with WNID correspondence taken
from the canonical \texttt{LOC\_synset\_mapping}.
\textbf{Stylized-ImageNet-100} applies canonical AdaIN style
transfer (public reference weights, 300 WikiArt style images,
$\alpha{=}1.0$, seeded) to the full 100-class ImageNet-100 validation set
(5{,}000 images), so all methods and seeds see identical stylized inputs.
\Cref{tbl:supp_targeted} reports per-seed top-1. On ImageNet-R, ASD beats
KD on all three seeds ($+1.17{\pm}0.06$\,pp); on Stylized-ImageNet-100,
ASD $>$ KD by $+0.84$ on the mean; both distillers exceed CE-only by
${\sim}5$/${\sim}4.6$\,pp. On ImageNet-A, all methods sit near the
$6.7\%$ chance level of the 15-way overlap and the differences are not
significant; we report it for completeness and treat it as uninformative
at this scale --- ImageNet-A filters for classifier hardness of
1000-class models rather than texture-shortcut reliance, making it an
underpowered probe for a 100-class student.

\begin{table}[h]
\centering\small
\setlength{\tabcolsep}{4pt}
\begin{tabular}{@{}llcccc@{}}
\toprule
Benchmark & Method & seed 1 & seed 2 & seed 3 & mean$\pm$std \\
\midrule
\multirow{3}{*}{ImageNet-R$_{100}$}
 & CE-only    & $33.33$ & $31.85$ & $31.89$ & $32.36{\pm}0.69$ \\
 & Vanilla KD & $37.14$ & $35.52$ & $36.31$ & $36.32{\pm}0.81$ \\
 & ASD (ours) & $38.26$ & $36.75$ & $37.48$ & $\mathbf{37.50{\pm}0.76}$ \\
\midrule
\multirow{3}{*}{Stylized-IN100}
 & CE-only    & $15.14$ & $14.98$ & $14.56$ & $14.89{\pm}0.24$ \\
 & Vanilla KD & $19.38$ & $17.84$ & $18.84$ & $18.69{\pm}0.64$ \\
 & ASD (ours) & $19.14$ & $20.72$ & $18.74$ & $\mathbf{19.53{\pm}0.85}$ \\
\midrule
\multirow{3}{*}{ImageNet-A$_{100}$ (at chance)}
 & CE-only    & $5.29$ & $4.85$ & $4.71$ & $4.95{\pm}0.25$ \\
 & Vanilla KD & $7.06$ & $5.88$ & $5.29$ & $6.08{\pm}0.73$ \\
 & ASD (ours) & $4.85$ & $5.29$ & $6.03$ & $5.39{\pm}0.49$ \\
\bottomrule
\end{tabular}
\caption{Per-seed top-1 (\%) on the targeted shortcut
benchmarks (ImageNet-100 students, eval-only). ImageNet-R: ASD $>$ KD on
all 3 seeds ($+1.17{\pm}0.06$). Stylized-IN100: ASD $>$ KD by $+0.84$.
ImageNet-A: all methods near the $6.7\%$ 15-way chance level;
differences not significant, reported for completeness.}
\label{tbl:supp_targeted}
\end{table}

\section{Additional Ablations}
\label{sec:supp_extra_ablations}

All sweeps in this section vary one hyperparameter while holding all
others at their default values. P1 (ResNet-34$\to$ResNet-18, CIFAR-100)
is used throughout; mean$\pm$std over 3 seeds.

\subsection{Shortcut Subspace Dimension \texorpdfstring{$K$}{K}}
\label{sec:supp_K_ablation}

\begin{table}[h]
\footnotesize
\centering
\begin{tabular}{lcccc}
\toprule
$K$ & 1 & 2 & \textbf{4 (default)} & 8 \\
\midrule
Clean Acc (\%)
  & 80.21$\,\pm\,$0.18 & 80.39$\,\pm\,$0.13
  & \textbf{80.50$\,\pm\,$0.07} & 80.27$\,\pm\,$0.16 \\
mCE (\%) $\downarrow$
  & 86.0$\,\pm\,$0.5 & 85.7$\,\pm\,$0.4
  & \textbf{85.6$\,\pm\,$0.4} & 85.9$\,\pm\,$0.6 \\
\bottomrule
\end{tabular}
\caption{\small Shortcut subspace dimension. $K=4$ balances expressivity
and stability; $K=8$ already includes eigenvalues near the noise floor of
the small eigengap.}
\label{tbl:supp_K}
\end{table}

\subsection{Suppression Margin \texorpdfstring{$\varepsilon$}{epsilon}}
\label{sec:supp_eps_ablation}

\begin{table}[h]
\footnotesize
\centering
\begin{tabular}{lcccc}
\toprule
$\varepsilon$ & 0.05 & \textbf{0.1 (default)} & 0.2 & 0.4 \\
\midrule
Clean Acc (\%)
  & 80.31$\,\pm\,$0.12 & \textbf{80.50$\,\pm\,$0.07}
  & 80.38$\,\pm\,$0.15 & 80.27$\,\pm\,$0.21 \\
mCE (\%) $\downarrow$
  & 85.7$\,\pm\,$0.5 & \textbf{85.6$\,\pm\,$0.4}
  & 86.0$\,\pm\,$0.3 & 86.1$\,\pm\,$0.6 \\
\bottomrule
\end{tabular}
\caption{\small Hinge margin. $\varepsilon = 0.05$ is too aggressive (the
hinge fires almost always and distorts representation);
$\varepsilon = 0.4$ is too loose (the hinge rarely fires).
$\varepsilon = 0.1$ corresponds to angular margin $84.3^{\circ}$ from the
shortcut subspace.}
\label{tbl:supp_eps}
\end{table}

\subsection{Memory-Queue Size \texorpdfstring{$M$}{M}}
\label{sec:supp_M_ablation}

\begin{table}[h]
\footnotesize
\centering
\begin{tabular}{lcccc}
\toprule
$M$ & 0 & 1024 & \textbf{4096 (default)} & 16384 \\
\midrule
Clean Acc (\%)
  & 80.24$\,\pm\,$0.20 & 80.41$\,\pm\,$0.10
  & \textbf{80.50$\,\pm\,$0.07} & 80.46$\,\pm\,$0.13 \\
\bottomrule
\end{tabular}
\caption{\small Memory-queue size for $\Ltc$. $M=0$ disables the queue
(in-batch + temporal negatives only). Performance improves with $M$ up to
$4096$ and plateaus, consistent with the $\log(N+1)$ saturation of the
InfoNCE bound~\cite{poole2019variational}.}
\label{tbl:supp_M}
\end{table}

\subsection{Warmup Duration \texorpdfstring{$T_{\mathrm{warmup}}$}{T\_warmup}}
\label{sec:supp_warmup_ablation}

\begin{table}[h]
\footnotesize
\centering
\begin{tabular}{lcccc}
\toprule
$T_{\mathrm{warmup}}$ (epochs) & 0 & 10 & \textbf{20 (default)} & 40 \\
\midrule
Clean Acc (\%)
  & 79.83$\,\pm\,$0.31 & 80.34$\,\pm\,$0.14
  & \textbf{80.50$\,\pm\,$0.07} & 80.42$\,\pm\,$0.12 \\
\bottomrule
\end{tabular}
\caption{\small Warmup ablation. $T_{\mathrm{warmup}}=0$ (no warmup)
significantly hurts accuracy because $\Ltc$ and $\Lss$ are applied to
an unstructured student representation early in training. Performance
saturates beyond 20 epochs.}
\label{tbl:supp_warmup}
\end{table}

\subsection{Batch-Size Ablation for the Shortcut Covariance}
\label{supp:batchsize}
$\widehat{\mathbf{C}}_{\mathrm{sc}}$ has rank at most $B$, so at
$B{=}64 \ll d_T{=}512$ the full matrix is rank-deficient. Because
$\mathcal{L}_{\mathrm{SS}}$ uses only the top-$K{=}4$ eigenspace
($K \ll B$) and is invariant to the eigenbasis within that subspace,
rank deficiency of the full matrix is not operative for training; we
verify this empirically by sweeping $B$ with all other hyperparameters
fixed (P1, 3 seeds). Subspace angle is the inter-batch mean principal
angle between top-$K$ eigenspaces computed on disjoint batches of size
$B$. Accuracy and robustness are stable across the sweep, and subspace
agreement improves with $B$ at the $O(\sqrt{\log d_T / B})$ rate
predicted by the Davis--Kahan analysis of Sec.~A.1.3.

\begin{table}[h]
\centering\small
\begin{tabular}{@{}lcccc@{}}
\toprule
$B$ & 32 & 64 (default) & 128 & 256 \\
\midrule
Clean Acc (\%) & $79.88$ & $\mathbf{80.27}$ & $80.06$ & $79.79$ \\
mCE (\%) $\downarrow$ & $\mathbf{85.41}$ & $86.09$ & $86.6$ & $88.6$ \\
$\angle$ to global subspace ($^\circ$) & $63.1$ & $56.1$ & $46.4$ & $37.9$ \\
inter-batch angle ($^\circ$) & $74.9$ & $67.9$ & $59.0$ & $49.2$ \\
\bottomrule
\end{tabular}
\caption{Batch-size ablation on P1 (single-seed re-runs at fixed
$T_{\mathrm{early}}{=}15\%$; the 3-seed default in Tab.~6 is $80.50$/$85.6$, so
$B{=}64$ here is within seed variance). Random-subspace baseline $85.8^\circ$.
Final accuracy/mCE are stable and \emph{best at the smallest, most
rank-deficient} batch ($B{=}32$); the per-batch top-$K$ subspace is far from random and
converges toward the full-data subspace as $B$ grows (Davis--Kahan rate).}
\label{tbl:supp_batch}
\end{table}

\subsection{Regenerating an Early Reference from a Final Checkpoint}
\label{supp:rewind}
When only $T_{\mathrm{final}}$ is available, we construct an early-like
reference by fine-tuning $T_{\mathrm{final}}$ for $10$ epochs at
learning rate $0.05$ (the original initial LR), which partially
re-expands the shortcut-aligned feature energy suppressed at convergence.
Using this regenerated reference in place of the true early checkpoint on
P1 recovers $80.18\%$ --- on par with KD ($80.22\%$) and within $0.1$\,pp of
the $80.27\%$ from the true checkpoint --- showing ASD degrades gracefully in
the final-checkpoint-only setting.

\subsection{Transformer Student}
\label{supp:vit}
We distill ResNet-50 into DeiT-Tiny on ImageNet-100 under the P11 training
protocol, applying $\mathcal{L}_{\mathrm{TC}}$ and
$\mathcal{L}_{\mathrm{SS}}$ to the CLS-token representation through the
standard projector of Sec.~3.5 (student embed-dim 192 $\to$ teacher 2048).
ASD reaches $65.86\%$ vs.\ $64.00\%$ for KD ($+1.86$\,pp), showing the
temporal shortcut signal is not CNN-specific and transfers to ViT students.

\section{Reproduction Protocol}
\label{sec:supp_repro}

\paragraph{Hardware and software.}
All experiments use NVIDIA A100-40GB GPUs. Single-GPU runs are used for
CIFAR-100 and TinyImageNet; ImageNet-100 uses 4-GPU data-parallel
training. Software stack: Python 3.10, PyTorch 2.1, CUDA 11.8,
torchvision 0.16, scikit-learn 1.3, NumPy 1.26. All random number
generators (Python, NumPy, PyTorch CPU, PyTorch GPU) are seeded. cuDNN
deterministic mode is enabled during evaluation; we disable it during
training because the $\sim$25\% overhead is not justified by the
$<0.1$ pp accuracy difference observed in pilot runs.

\paragraph{Data augmentation.}
CIFAR-100 / TinyImageNet: random crop with padding 4, horizontal flip,
per-channel normalization with dataset mean/std.
ImageNet-100: random resized crop to $224\times224$, horizontal flip,
colour jitter (brightness, contrast, saturation $= 0.4$), per-channel
ImageNet normalization.

\paragraph{Default ASD hyperparameters (all datasets).}
$\alpha_{\mathrm{kd}}=1.0$,
$\alpha_{\mathrm{tc}}=0.8$,
$\alpha_{\mathrm{ss}}=1.0$,
$\tau_{\mathrm{kd}}=4.0$,
$\tau_c=0.07$,
$\varepsilon=0.1$,
$K=4$,
$M=4096$,
$T_{\mathrm{warmup}}=20$ epochs.

\paragraph{$\Tearly$ checkpoint selection.}
CIFAR-100: epoch 36 of 240 ($15\%$). TinyImageNet: epoch 30 of 200
($15\%$). ImageNet-100: epoch 15 of 100 ($15\%$). The $15\%$ fraction is
fixed across datasets; the $\Tearly$-sensitivity ablation (Section~6.2 of
the main paper) shows that performance is stable across $10\%$--$30\%$,
so the choice does not require dataset-specific tuning.

\paragraph{Baseline reproduction.}
We use the official released code where available
(CRD~\cite{tian2019contrastive}, DKD~\cite{zhao2022decoupled},
CkptKD~\cite{jin2022efficient}) and reimplement FitNets, AT, and KD against
the original papers. For CkptKD, the checkpoint selection follows the
mutual-information-based criterion of~\cite{jin2022efficient}.

\paragraph{Training time.}
A single CIFAR-100 ASD run takes $\approx 4.5$ hours on one A100-40GB.
Total compute for the reported 13 pairs $\times$ 3 seeds (CIFAR-100 only):
$\approx 175$ A100-hours. ImageNet-100 and TinyImageNet contribute an
additional $\approx 75$ A100-hours. Total reported budget:
$\approx 250$ A100-hours.

\section{Paired-Seed Significance Analysis}
\label{sec:supp_significance}

The TinyImageNet result in Table~3 of the main paper (Beyond-CIFAR
results) has the largest seed variance among the three Beyond-CIFAR pairs
($\pm 1.45$). To assess whether the $+0.98$ pp gain over CRD is robust to
seed effects, we run a paired-seed analysis: for each of three seeds
$s \in \{1, 2, 3\}$, we train ASD and CRD with the identical RNG
initialization for both Python, NumPy, and PyTorch.

\begin{table}[h]
\footnotesize
\centering
\begin{tabular}{lccc}
\toprule
Seed & CRD (\%) & ASD (\%) & ASD $-$ CRD (\%) \\
\midrule
1 & 44.91 & 45.53 & $+0.62$ \\
2 & 45.16 & 46.30 & $+1.14$ \\
3 & 44.99 & 46.17 & $+1.18$ \\
\midrule
Mean & 45.02 & 46.00 & $+0.98$ \\
\bottomrule
\end{tabular}
\caption{\small Paired-seed comparison of ASD vs.\ CRD on TinyImageNet
(P12: ResNet-34$\to$ResNet-18). ASD wins on all 3 seeds with margins
between $+0.62$ pp and $+1.18$ pp.}
\label{tbl:supp_paired}
\end{table}

ASD wins on 3 of 3 paired seeds (one-sided sign test $p = 0.125$; not
significant at $p < 0.05$ given the sample size). All three per-seed
margins are positive and exceed the $+0.5$ pp threshold typically
considered meaningful in distillation benchmarks. We report the unpaired
$+0.98 \pm 1.45$ in the main paper for consistency with the rest of the
table.

\section{Davis--Kahan Subspace Stability}
\label{sec:supp_subspace_stability}

\paragraph{Centered diagnostic vs.\ uncentered training operator.}
We first clarify a deliberate distinction. The training loss $\Lss$ uses the
\emph{uncentered} second moment $\Cschat = \frac{1}{B}\sum_i \Dh_i\Dh_i^{\top}$
(Eq.~(4) of the main paper), because the mean displacement
$\bar{\Dh}=\E[\Dh]$ is itself a dominant shortcut direction that centering
would discard. The stability analysis below, however, reports the spectrum
of the \emph{centered} covariance
$\mathrm{Cov}(\Dh)=\E[(\Dh-\bar{\Dh})(\Dh-\bar{\Dh})^{\top}]
=\Csc-\bar{\Dh}\bar{\Dh}^{\top}$, which is the conservative choice: removing
the rank-one mean term can only shrink the leading eigenvalue and the
eigengap (by Weyl's inequality, $\lambda_1(\mathrm{Cov}(\Dh)) \le
\lambda_1(\Csc)$). The uncentered operator that ASD actually optimizes has a
\emph{larger} leading eigenvalue and a correspondingly larger eigengap, so
the stability we report is a lower bound on the stability of the operator
used in training. Empirically on P1, the uncentered leading eigenvalue is
$4.79$ versus $0.64$ for the centered covariance; the centered numbers below
are therefore a stress test, not the operative quantity.

The centered shortcut covariance on P1 has eigenvalues
$\lambda_1 = 2.06, \lambda_2 = 1.84, \ldots,
\lambda_K = 0.85$, $\lambda_{K+1} = 0.79$
(absolute eigengap $0.059$, relative eigengap
$0.059 / 2.06 = 0.029$). The Davis--Kahan
bound~\cite{davis1970rotation,yu2015useful} states
\begin{equation}
  \norm{ \sin\Theta(\Uk, \Ukhat) }_2
  \;\leq\;
  \frac{ \norm{ \Csc - \Cschat }_2 }{ \lambda_K - \lambda_{K+1} }
  \;=\;
  \frac{ \norm{ \Csc - \Cschat }_2 }{ 0.059 }.
\end{equation}
For even modest perturbation $\norm{\Csc - \Cschat}_2 = 0.06$
(approximately the order of magnitude expected at the per-batch sample
size of $B=64$), the centered bound gives $\sin\Theta \geq 1$, i.e.\
vacuous. We now argue why this does not undermine ASD.

\paragraph{Why this is not a failure of ASD.}
$\Lss$ penalizes the \emph{projection magnitude}
$\norm{\hSbar^{\top} \Uk}_2$, which is invariant under any orthogonal
rotation within $\mathrm{span}(\Uk)$. Two eigenbases that span the same
subspace yield identical $\Lss$. Eigenvector-level instability under a
small eigengap is therefore irrelevant: what $\Lss$ requires is a stable
\emph{subspace}, not a stable basis.

\paragraph{Empirical bootstrap.}
The bootstrap stability analysis (\cref{sec:supp_validation}; Section~6.4
of the main paper) gives a mean principal angle of
$32.6^{\circ} \pm 1.6^{\circ}$ across resampled training subsets. The
small standard deviation ($1.6^{\circ}$) is the key quantity: it shows
that resampled of $\mathrm{span}(\Uk)$ deviate from a common
mean subspace by approximately the same amount each time, indicating a
\emph{stable shortcut region} even though the per-batch eigenbasis is
not pinned to a fixed orientation.

\paragraph{Loss trajectory evidence.}
Across all reported runs, $\Lss$ decreases monotonically through training
to a value below $5\times10^{-3}$ at the final epoch. If the per-batch
subspace were random, $\Lss$ would not decrease steadily; the
monotonic decay confirms that the optimizer is actively reducing the
projection magnitude onto a recognisable, consistent shortcut region.

\section{Computational Complexity}
\label{sec:supp_complexity}

\paragraph{Memory.}
ASD stores one additional teacher checkpoint $\Tearly$ ($\sim 21$ M
parameters for ResNet-34 on CIFAR-100) and a FIFO memory queue of $M$
normalized $\Tfinal$ features ($M \cdot d_T \cdot 4$ bytes; for default
$M=4096, d_T=512$: $\approx 8$ MB).

\paragraph{Per-iteration compute.}
A standard KD iteration requires one student forward+backward and one
teacher forward. ASD adds:
(i) one $\Tearly$ forward pass ($\approx 12\%$ overhead for ResNet-34);
(ii) one rank-$K$ eigendecomposition of a $d_T \times d_T$ matrix
($O(d_T^3)$, but $d_T \leq 512$ so $<3$ ms on A100);
(iii) projection and contrastive computations ($O(B \cdot M \cdot d_T)$,
dominated by the memory-queue dot product, $\approx 5$ ms).
Total: roughly $15$--$20\%$ wall-clock overhead per epoch compared with
standard KD.

\paragraph{One-time setup.}
The early teacher checkpoint $\Tearly$ is obtained as a byproduct of
training the standard $\Tfinal$ teacher; no additional teacher training
is required.

\section{Extended Analysis of Negative Results}
\label{sec:supp_neg_results}

\subsection{P3 Capacity Boundary}
\label{sec:supp_p3}

P3 (ResNet-32$\times$4 $\to$ ResNet-8$\times$4) is the only same-family
pair where ASD does not improve over KD on clean accuracy. We summarize
three corroborating pieces of evidence that this is a capacity boundary,
not a method failure:

\begin{enumerate}
\item \textbf{Negative-source ablation.} On P3, replacing the temporal
negative with random in-batch negatives \emph{improves} accuracy
($73.32\%$ vs.\ $73.13\%$). This is the only pair where the temporal
negative is harmful.

\item \textbf{CkptKD on P3.} CkptKD --- which encourages the student to
\emph{retain} early-teacher features --- has its only robustness
improvement over KD on P3 ($96.6$ vs.\ $98.1$ mCE). Conversely, suppressing
those features (as ASD does) gives essentially no robustness gain.

\item \textbf{Shortcut projection of the KD student.} Measuring
$\norm{\hSbar^{\top}\Uk}_2$ for a baseline KD-trained ResNet-8$\times$4
student gives mean projection $0.07$, compared with $0.21$ for the KD
ResNet-18 student on P1. The KD ResNet-8$\times$4 already has less
shortcut alignment to suppress, leaving ASD little room to improve.
\end{enumerate}

The combined evidence indicates that ResNet-8$\times$4 is at a regime
where some low-level features are not shortcut noise but necessary
discriminative scaffolding. We do not attempt to argue ASD universally
dominates --- the capacity boundary is real and is the price of explicit
shortcut suppression.

\subsection{P2 Robustness Regression}
\label{sec:supp_p2}

On WRN-40-2 $\to$ WRN-16-2, ASD reaches $94.2$ mCE, slightly above KD
($93.7$) and DKD ($93.4$). The error decomposition by corruption category
(not included in the main paper) shows that ASD reduces error on
texture-related corruptions (\emph{contrast}, \emph{elastic transform},
\emph{pixelate}) but increases error on JPEG-compression and brightness
corruptions, which are dominated by class-boundary calibration rather
than feature-shortcut reliance. DKD's target/non-target logit
decomposition is more effective in this regime, and ASD's geometric
suppression provides no advantage. We treat this as an honest negative
result: ASD targets feature-level shortcut reliance and is not a
substitute for logit-calibration methods on pairs where the dominant
failure mode is calibration.

\subsection{What ASD Does Not Target}

The robustness analysis in Section~4.4 of the main paper holds only when
the shift acts through directions captured by $\Dh$. We therefore do not
expect ASD to help under shifts in directions the teacher trajectory does
not traverse --- e.g.\ adversarial $\ell_\infty$ perturbations or domain
shifts orthogonal to the natural-image shortcut axis. CIFAR-100-C is the
regime where the design hypothesis applies, and ASD's robustness gains
correspondingly concentrate there.

\section{ADE20K Visualizations}
\label{sec:supp_ade20k_vis}

This section provides the qualitative evidence supporting the quantitative
results in Section~5.5 of the main paper. All visualizations use the
ResNet-101$\to$ResNet-18 DeepLabv3 pair trained under the ADE20K protocol
described there. \Cref{fig:supp_tsne} shows t-SNE embeddings of penultimate
backbone features on the first 50 ADE20K classes; \cref{fig:supp_qual_1,fig:supp_qual_2,fig:supp_qual_3,fig:supp_qual_4} show
per-image segmentation predictions across 16 diverse validation scenes.

\subsection*{t-SNE Feature Embeddings}

\begin{figure}[h]
  \centering
  \includegraphics[width=\linewidth]{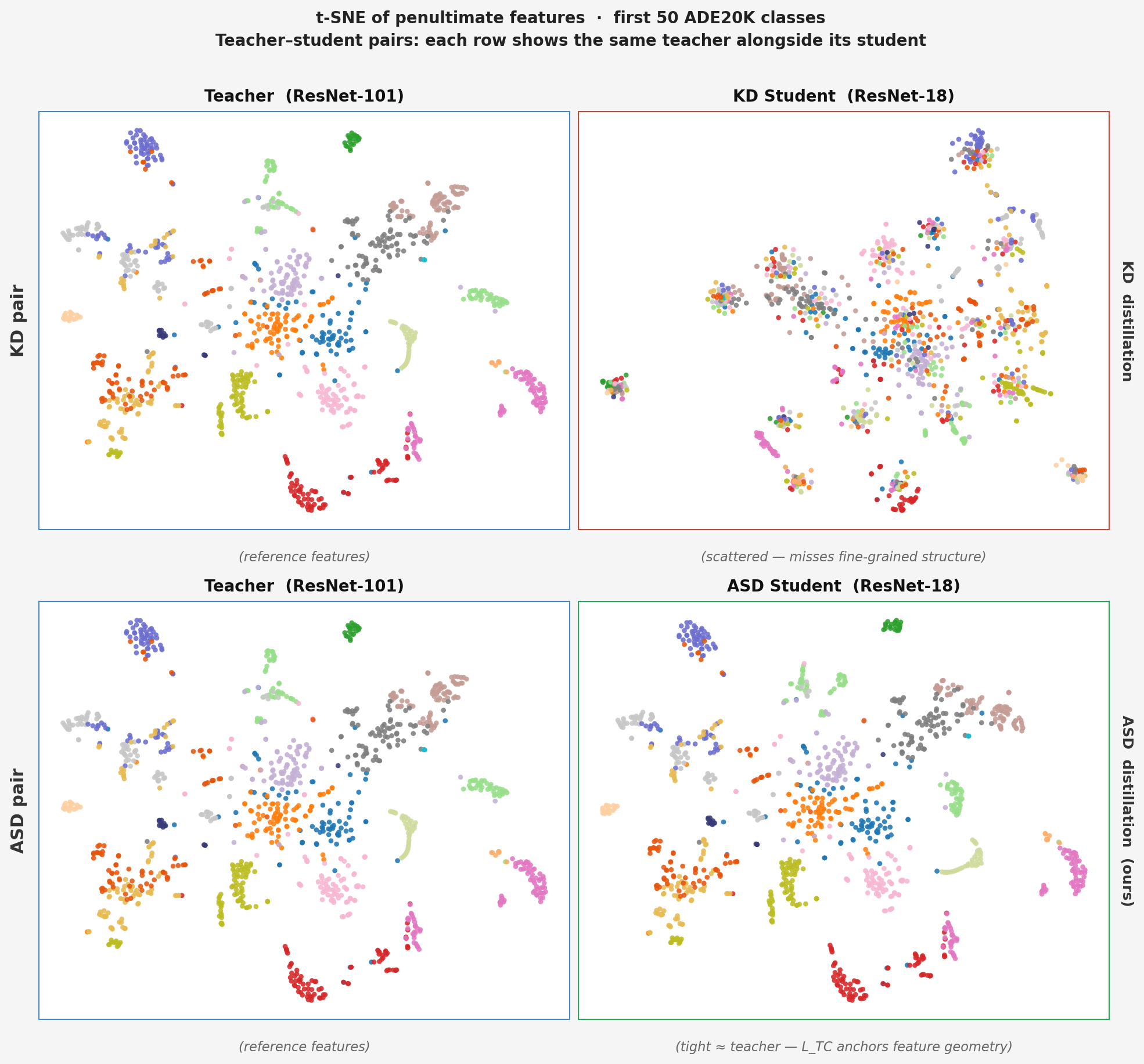}
  \caption{t-SNE of penultimate features on the first 50 ADE20K classes.
  Each row shows a teacher--student pair side by side.
  \emph{Top row (KD pair):} the KD student (right, red border) produces
  scattered, poorly-separated clusters relative to the teacher (left, blue
  border), missing fine-grained inter-class structure.
  \emph{Bottom row (ASD pair):} the ASD student (right, green border) closely
  mirrors the teacher's cluster geometry, with tighter intra-class
  compactness and clearer inter-class separation. The improvement reflects
  $\Ltc$ anchoring the student's feature geometry to the converged teacher
  and $\Lss$ suppressing the shortcut-aligned directions that cause
  indiscriminate cluster overlap.}
  \label{fig:supp_tsne}
\end{figure}

\subsection*{Qualitative Segmentation Results}

Each figure below shows five columns --- Input Image, Ground Truth, Teacher
(ResNet-101), KD (ResNet-18), ASD ours (ResNet-18) --- across four
validation scenes per page. The KD student (red border) consistently
produces noisy, high-frequency artefacts across region boundaries, a direct
symptom of shortcut reliance on local texture statistics. The ASD student
(green border) produces spatially coherent masks that more closely match
the teacher's structural predictions, with notably cleaner region boundaries
and far less salt-and-pepper noise throughout.

\begin{figure}[h]
  \centering
  \includegraphics[width=\linewidth]{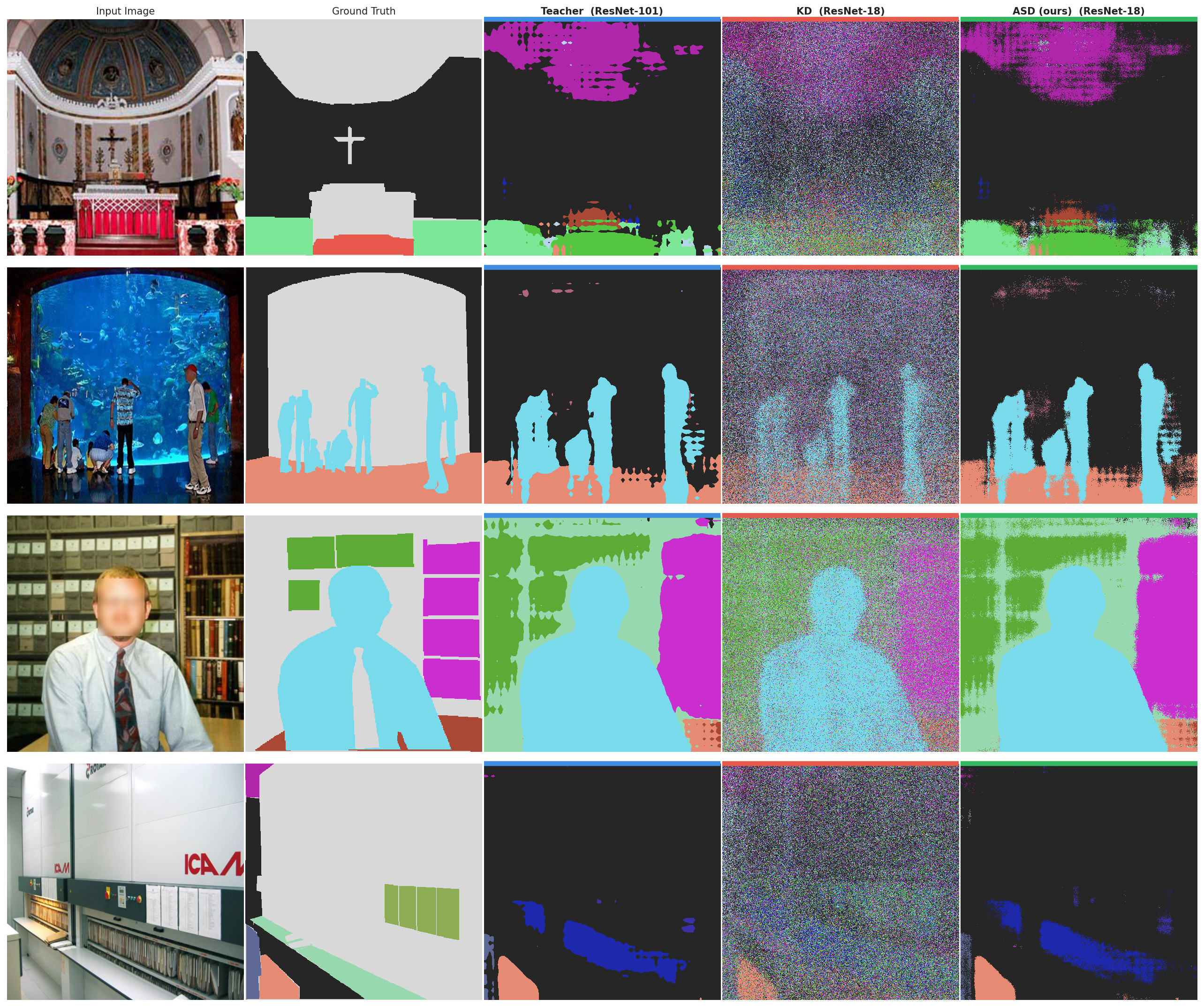}
  \caption{Qualitative segmentation results, scenes 1--4.
  Columns: Input Image / Ground Truth / Teacher~(ResNet-101) /
  KD~(ResNet-18) / ASD ours~(ResNet-18).
  On the church interior (row~1), the KD student fails to recover the
  structured ceiling and altar regions, producing dense texture noise;
  ASD restores large coherent regions consistent with the teacher.
  On the aquarium scene (row~2) and library (row~3), KD's predictions
  dissolve into per-pixel noise while ASD preserves person silhouettes and
  background layout. Row~4 (industrial interior) shows ASD recovering the
  dominant flat surfaces that KD misclassifies through texture shortcuts.}
  \label{fig:supp_qual_1}
\end{figure}

\begin{figure}[h]
  \centering
  \includegraphics[width=\linewidth]{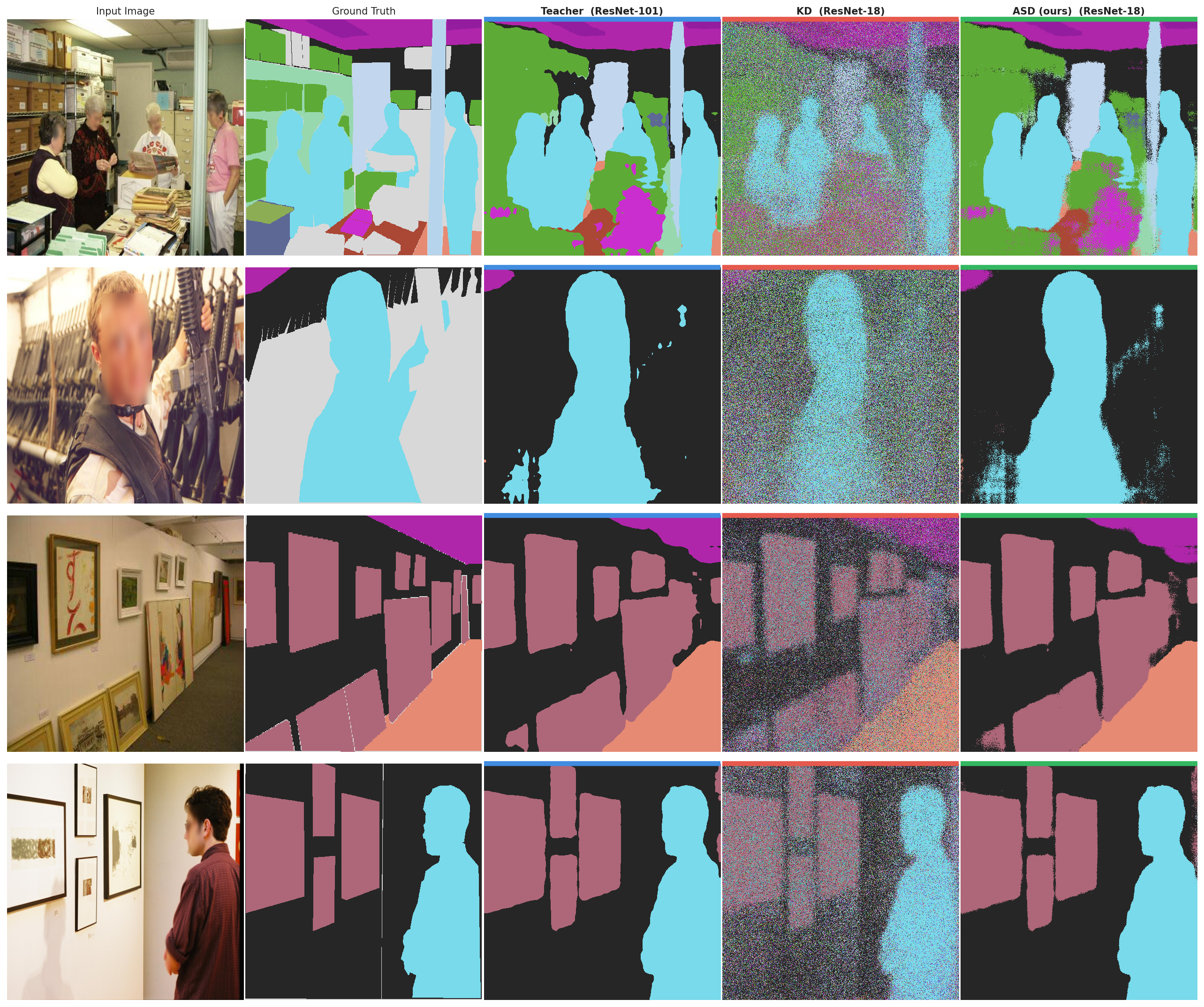}
  \caption{Qualitative segmentation results, scenes 5--8.
  On the storage room (row~1), ASD correctly segments the clustered people
  and background shelving that KD collapses into noise. Row~2 (person in
  foreground) highlights the boundary recovery: ASD produces a clean
  person silhouette and separates background wall from floor, whereas KD
  generates dense scatter across the entire prediction. The gallery corridor
  (rows~3--4) shows ASD recovering the repeated wall/painting structure with
  sharp spatial transitions.}
  \label{fig:supp_qual_2}
\end{figure}

\begin{figure}[h]
  \centering
  \includegraphics[width=\linewidth]{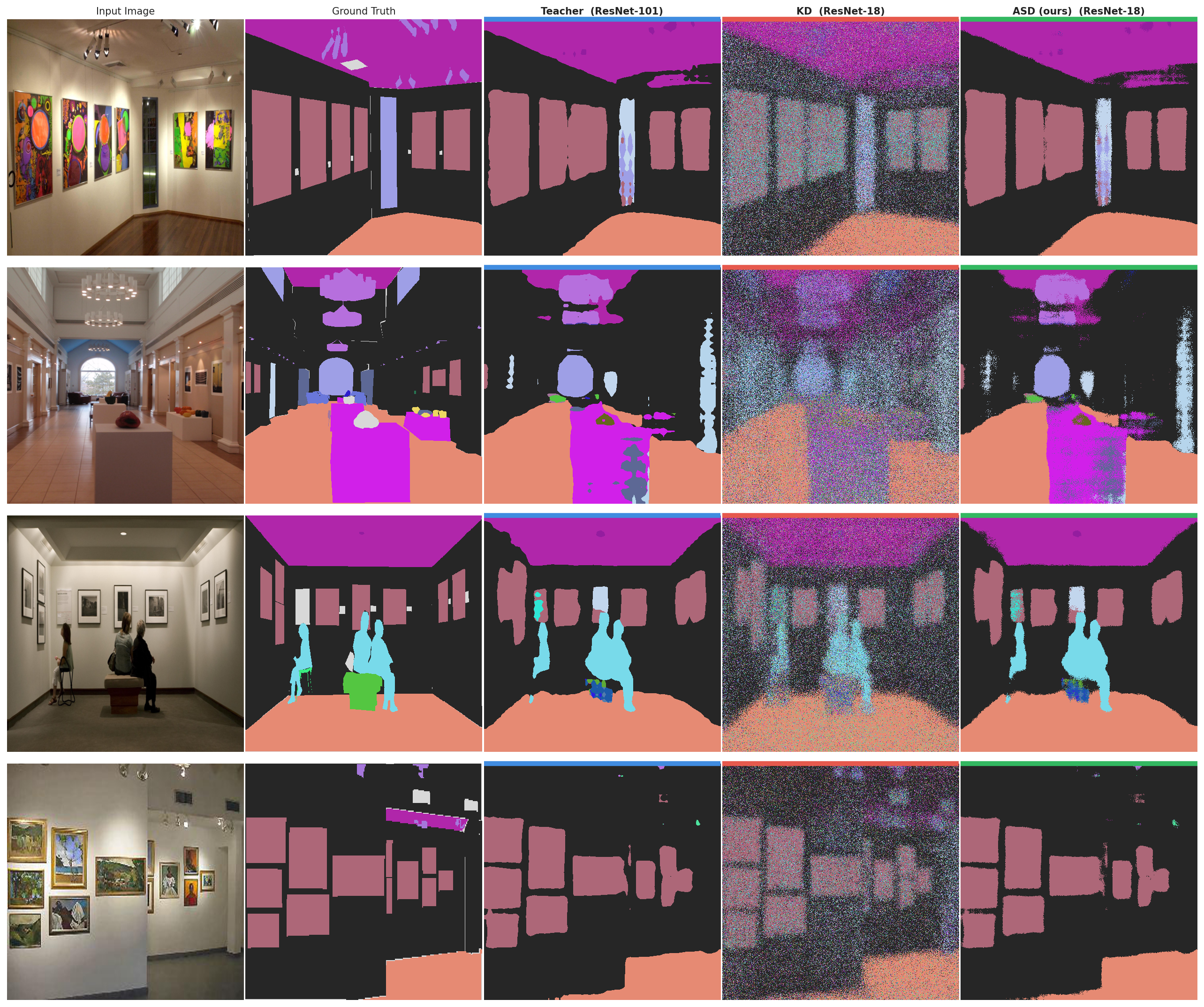}
  \caption{Qualitative segmentation results, scenes 9--12.
  Gallery and museum interiors stress the model's ability to handle
  texturally complex walls covered in artwork. The KD student is
  particularly susceptible here: dense painted surfaces trigger texture
  shortcuts, producing noisy predictions. ASD suppresses these shortcuts
  and predicts structurally coherent layouts --- wall planes, floor, ceiling,
  and person regions --- that closely match the teacher. Row~3 (museum
  with seated figures) demonstrates ASD recovering fine person detail that
  is entirely lost in the KD prediction.}
  \label{fig:supp_qual_3}
\end{figure}

\begin{figure}[h]
  \centering
  \includegraphics[width=\linewidth]{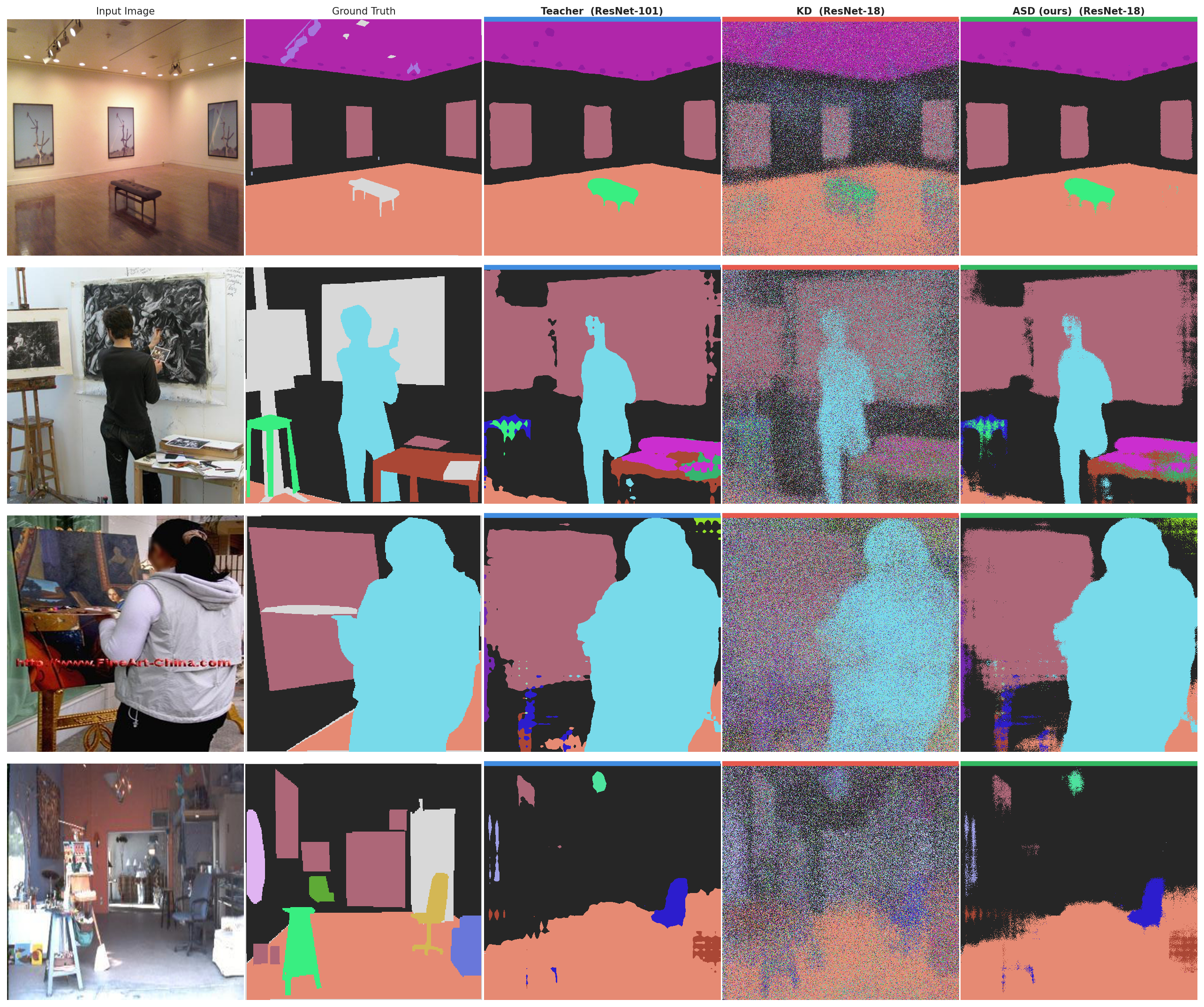}
  \caption{Qualitative segmentation results, scenes 13--16.
  Scenes include gallery interiors, artists at work, and cluttered studio
  environments. Row~1 (minimal gallery) shows that both methods handle
  simple scenes comparably, consistent with the main-paper observation
  that shortcut suppression is most impactful in texturally challenging
  conditions. Rows~2--4 involve artists, easels, and richly textured
  backdrops where KD degrades severely; ASD maintains coherent person and
  background segmentation throughout.}
  \label{fig:supp_qual_4}
\end{figure}






\bibliography{egbib,bib_additions}